\documentclass{article}

\usepackage{amsmath}
\usepackage{amssymb}
\usepackage{mathtools}
\usepackage{amsthm}

\usepackage[shortlabels,inline]{enumitem}

\usepackage{multirow}
\usepackage{colortbl}

\usepackage{microtype}
\usepackage{graphicx}
\usepackage{subcaption}
\usepackage{booktabs} 

\usepackage{hyperref}

\usepackage[accepted]{icml2026}

\usepackage[dvipsnames]{xcolor}
\usepackage[capitalize,noabbrev]{cleveref}

\theoremstyle{plain}
\newtheorem{theorem}{Theorem}[section]
\newtheorem{proposition}[theorem]{Proposition}
\newtheorem{lemma}[theorem]{Lemma}

\theoremstyle{definition}

\theoremstyle{remark}
\newtheorem{remark}[theorem]{Remark}

\usepackage[textsize=tiny]{todonotes}

\usepackage{titlesec}
\titleformat{\paragraph}[runin]
  {\normalfont\normalsize\bfseries} 
  {\theparagraph}                   
  {0em}                             
  {}                                

\titlespacing{\paragraph}
  {0pt}          
  {6pt}          
  {1em}          

\definecolor{grey}{RGB}{128, 128, 128}
\definecolor{mygreen}{RGB}{164, 217, 187}
\definecolor{myblue}{RGB}{105, 158, 212}
\definecolor{myred}{RGB}{239, 129, 131}
\definecolor{cvprblue}{rgb}{0.21,0.49,0.74}
\definecolor{tabblue}{RGB}{69, 93, 122}
\definecolor{codegreen}{RGB}{100, 143, 79}

\definecolor{algyellow}{RGB}{233, 222, 182}
\definecolor{alggreen}{RGB}{194, 209, 185}
\definecolor{algpurple}{RGB}{189, 178, 206}
\newcommand{\best}[1]{$\mathbf{#1}$}

\usepackage{soul}

\usepackage{amsmath,amsfonts,bm}

\def\eqref#1{equation~\ref{#1}}

\def\1{\bm{1}}

\def\vc{{\bm{c}}}

\def\vh{{\bm{h}}}

\def\vo{{\bm{o}}}
\def\vp{{\bm{p}}}

\def\vs{{\bm{s}}}

\def\vv{{\bm{v}}}

\def\mA{{\bm{A}}}

\def\mH{{\bm{H}}}

\def\mK{{\bm{K}}}

\def\mP{{\bm{P}}}
\def\mQ{{\bm{Q}}}
\def\mR{{\bm{R}}}
\def\mS{{\bm{S}}}

\def\mU{{\bm{U}}}
\def\mV{{\bm{V}}}
\def\mW{{\bm{W}}}

\DeclareMathAlphabet{\mathsfit}{\encodingdefault}{\sfdefault}{m}{sl}
\SetMathAlphabet{\mathsfit}{bold}{\encodingdefault}{\sfdefault}{bx}{n}

\def\sR{{\mathbb{R}}}

\newcommand\numberthis{\addtocounter{equation}{1}\tag{\theequation}}

\begin{document}

\icmltitlerunning{Singular Proxies for Adaptive Caching in Diffusion Language Models}

\twocolumn[
    \icmltitle{Singular Proxies for Adaptive Caching in Diffusion Language Models}
    
    
    
    \icmlsetsymbol{equal}{*}

    \begin{icmlauthorlist}
        \icmlauthor{Wenhao Sun}{ntu}
        \icmlauthor{Rong-Cheng Tu$^\S$}{ntu}
        \icmlauthor{Yifu Ding}{ntu}
        \icmlauthor{Zhao Jin}{ntu}
        \icmlauthor{Jingyi Liao}{ntu}
        \icmlauthor{Yongcheng Jing}{ntu}
        \icmlauthor{Dacheng Tao$^\S$}{ntu}
    \end{icmlauthorlist}

    \icmlaffiliation{ntu}{College of Computing and Data Science, Nanyang Technological University, Singapore, Singapore}
    
    \icmlcorrespondingauthor{Rong-Cheng Tu}{rongcheng.tu@ntu.edu.sg}
    \icmlcorrespondingauthor{Dacheng Tao}{dacheng.tao@ntu.edu.sg}
    
    \icmlkeywords{Diffusion Models, Transformers, Efficient Diffusion}

  \vskip 0.3in
]



\printAffiliationsAndNotice{}  
\begin{abstract}

While Diffusion Language Models (DLMs) offer a flexible, arbitrary-order alternative to the autoregressive paradigm, their non-causal nature precludes standard KV caching, forcing costly hidden state recomputation at every decoding step.
Existing DLM caching approaches reduce this cost by selective hidden state updates; however, they are still limited by (i) costly token-wise update identification heuristics and (ii) rigid, uniform budget allocation that fails to account for heterogeneous hidden state dynamics.
To address these challenges, we present SPA-Cache that jointly optimizes update identification and budget allocation in DLM cache.
First, we derive a low-dimensional singular proxy that enables the identification of update-critical tokens in a low-dimensional subspace, substantially reducing the overhead of update identification.
Second, we introduce an adaptive strategy that allocates fewer updates to stable layers without degrading generation quality.
Together, these contributions significantly improve the efficiency of DLMs, yielding up to an $8\times$ throughput improvement over vanilla decoding and a $2$--$4\times$ speedup over existing caching baselines.
The code is available at \url{https://github.com/wenhao728/spa-cache}.

\end{abstract}
\section{Introduction}
\label{01_intro}

Diffusion Language Models (DLMs) \citep{D3PM,CDCD,Diffusion-LM,DiffuSeq} have recently emerged as a versatile alternative to the dominant autoregressive (AR) paradigm \citep{gpt}, primarily characterized by their ability to decode in arbitrary orders and facilitate the potential of parallel decoding. 
By decoupling generation from a fixed left-to-right order, DLMs have demonstrated remarkable potential in handeling multi-modality tasks \citep{llada-v,mmada}, overcoming the ``reverse curse,'' \citep{BerglundTKBSKE24} enhancing diversity \citep{DiffuCoder}, and scaling effectively across reasoning \citep{d1,DiffPO,llada-1.5, DPO}.
However, a significant performance gap remains: while proprietary industrial prototype models, such as Gemini Diffusion \citep{gemini} and Mercury \citep{mercury}, have begun to showcase ultra high throughput, current open-source implementations \citep{llada,dream} continue to suffer from prohibitive inference latency compared to their AR counterparts.
This inefficiency is fundamentally rooted in the unpredictable decoding order and the resulting adaptation of bidirectional attention.
Consequently, the standard KV-cache \citep{kvcache}, a cornerstone of efficiency in AR models, is rendered incompatible, as the hidden states of DLMs cannot be simply reused without recomputation to capture the evolving global dependencies.

Recent efforts to address the DLM inference bottleneck have mainly focused on caching with heuristic-based updates, often assuming a locality bias, where only hidden states in the immediate vicinity of recently decoded tokens require updates \citep{dkv-cache,d2cache}.
Others rely on restrictive block-wise semi-autoregressive decoding \citep{Fast-dLLM,Fast-dLLMv2,Sparse-dLLM}, which compromises the arbitrary-order flexibility of DLMs and sacrifices global dependency variations.
A more principled alternative \citep{dLLM-Cache} attempts to identify update-critical tokens by monitoring shifts in the Value states; however, this approach remains bottlenecked.
First, the identification process is computationally heavy, requiring high-dimensional projection and similarity computations at every step. This overhead often offsets the gains from sparse updates as the generation lengths scale.
Second, these methods typically apply a uniform update budget across all layers. As illustrated in \cref{fig-method_budget}, the fraction of drifting tokens, which necessitate updates, shows layer-wise heterogeneity.
Consequently, a high fixed budget (\eg, 30\%) leads to computational redundancy in stable layers, whereas a lower budget (\eg, 20\%) triggers performance degradation due to the insufficient updating of highly unstable layers.

To address these challenges, we present \textbf{SPA-Cache}, a framework that jointly optimizes update identification and budget allocation.
We ground our approach in a formal analysis of the evolutionary dynamics of DLM hidden states, characterizing the statistical properties that dictate when and where updates are most critical.
Building on these theoretical insights, we introduce a \textbf{singular proxy} that identifies update-critical tokens in the early stages of each layer (\cref{fig-intro_sim}).
Leveraging a singular approximation \citep{svd} of the identification process, our singular proxy retains the dominant directions of state drift in a low-dimensional manifold, thereby avoiding the computational bottleneck associated with high-dimensional identification.
Furthermore, observing that hidden state stability fluctuates significantly across model layers, we depart from standard uniform update fractions in favor of an adaptive budget allocation scheme.
By dynamically prioritizing computational resources for high-variance layers while aggressively caching stable ones, SPA-Cache maximizes throughput without sacrificing generation quality.

\begin{figure}[tp]
\begin{center}
\centerline{\includegraphics[width=\linewidth]{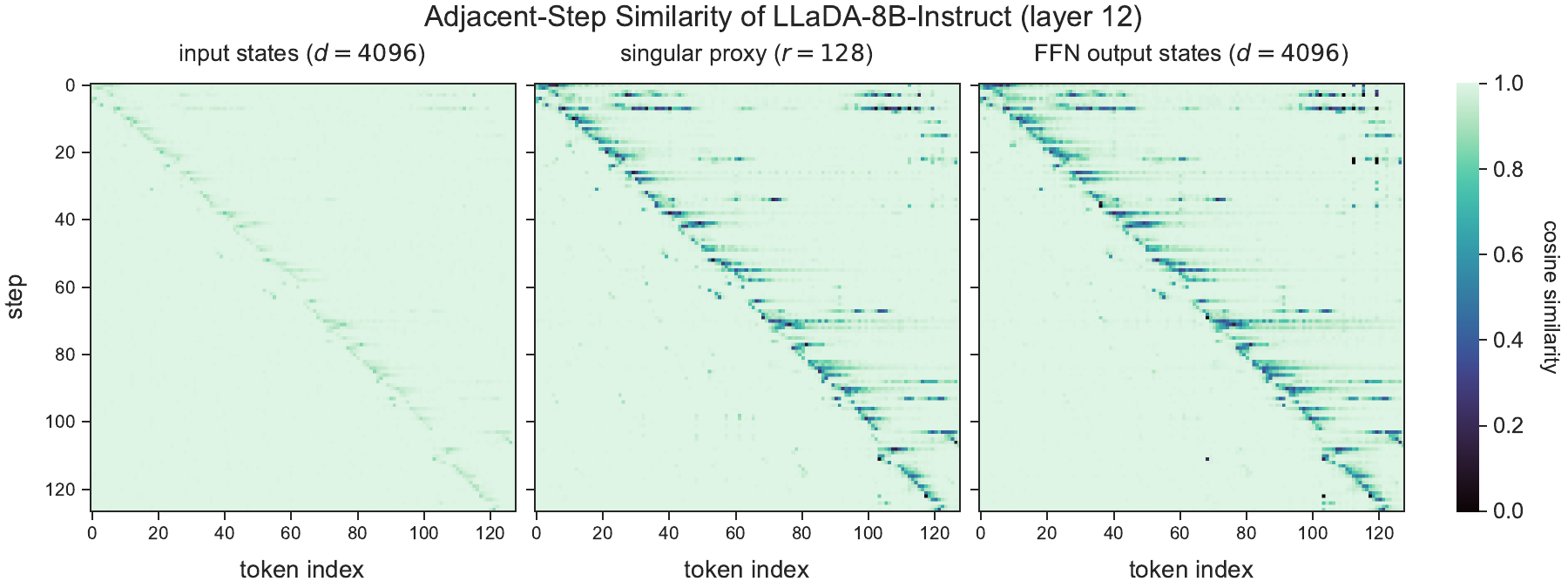}}
\caption{
    \textbf{Adjacent-Step Similarities for LLaDA-8B-Instruct} \citep{llada}.
    While input states exhibit uniformly high similarity, our singular proxy at early stage of the layer can efficiently uncover the drifts in the final FFN output.
    This provides a clear guidance for selective hidden state updates.
}
\label{fig-intro_sim}
\end{center}
\vskip -0.2in
\end{figure}

Extensive evaluations on seven benchmarks demonstrate that SPA-Cache significantly optimizes DLM inference, delivering up to a $8\times$ increase in throughput with negligible impact on generation quality.
Our SPA-Cache is fundamentally orthogonal to sequence-level optimizations and can be integrated with parallel decoding methods \citep{Fast-dLLM} to amplify gains to a total of $28\times$.
By drastically reducing inference overhead, this work paves the way and positions DLMs as a computationally viable alternative for high-throughput, real-world applications.

\section{Related Work}
\label{02_related}

\subsection{Diffusion Language Models (DLMs)}
\label{02_01_dlm}

The transition of diffusion processes to language modeling originated with Gaussian-based continuous latent diffusion in Diffusion-LM \citep{Diffusion-LM} and structured discrete denoising in D3PM \citep{D3PM}.
Subsequent frameworks such as CDCD \citep{CDCD} introduced score interpolation to bridge the discrete-continuous gap between latents and targets.
Plaid \citep{plaid} demonstrated that diffusion models follow predictable scaling laws, eventually outperforming smaller autoregressive (AR) models in zero-shot likelihood.
More recently, MDLM \citep{MDLM} and LLaDA \citep{llada} have converged on iterative unmasking as a highly scalable DLM alternative to AR modeling, offering bidirectional context and inherent parallelism. 
However, the bidirectional nature of these models requires recomputing a forward pass over the entire sequence of length $N$ at each decoding step, resulting in a severe inference bottleneck with $O(T \times N^2)$ complexity for $T$ decoding steps.

\begin{figure}[tp]
\begin{center}
\centerline{\includegraphics[width=0.8\linewidth]{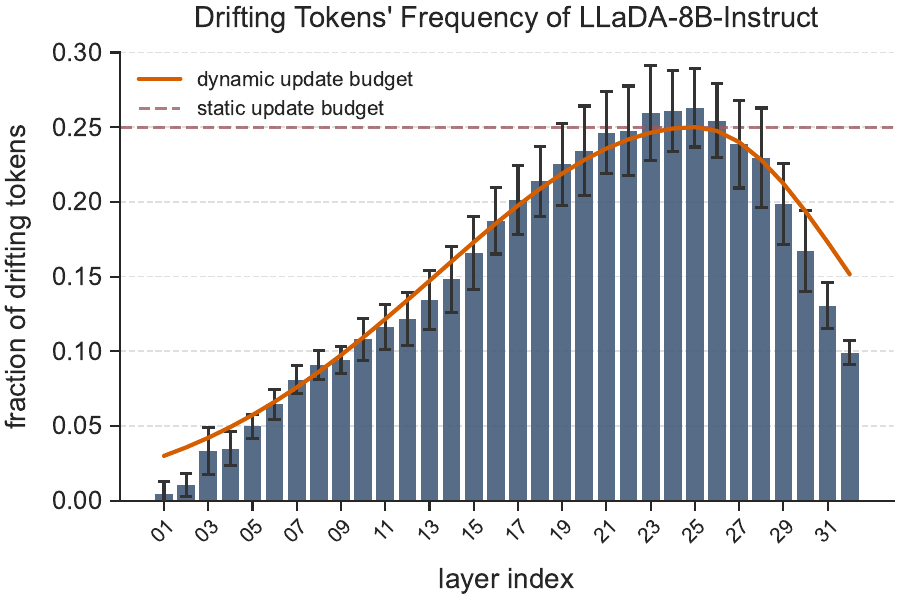}}
\caption{
    \textbf{Distribution of Drift Across Layers.}
    Rather than using a fixed threshold (dashed line) to allocate update budgets uniformly across layers, our SPA-Cache employs a dynamic threshold (solid line) that allocates more updates on the most volatile layers and fewer on the more stable layers.
}
\label{fig-method_budget}
\end{center}
\vskip -0.3in
\end{figure}

\subsection{Caching Mechanisms for Language Models}
\label{02_02_cache}
KV cache \citep{kvcache}, a cornerstone of efficiency in AR language models, is fundamentally incompatible with DLMs due to the unpredictable decoding order and the subsequent reliance on full bidirectional attention \citep{dLLM-Cache}.
Recent efforts, like dKV-Cache \citep{dkv-cache}, d2Cache \citep{d2cache}, and Fast-dLLM \citep{Fast-dLLM} propose window-based heuristics that selectively update tokens in the immediate vicinity of recently decoded ones while reusing distant hidden states.
However, these methods often rely on a locality bias that lacks theoretical justification and may fail to capture complex global dependency variations.
dLLM-Cache \citep{dLLM-Cache} monitors feature drift to identify and update critical tokens at arbitrary positions.
Despite its flexibility, this approach remains limited by (i) the high overhead of high-dimensional similarity checks and (ii) a uniform update budget across all layers that fails to exploit the layer-wise stability profiles of DLMs.

\subsection{Other Efficient Generation Techniques}
\label{02_03_parallel}
Beyond caching, researchers have explored cache eviction \citep{Sparse-dLLM} and specialized attention variants \citep{D2F,sla} to reduce the cost of bidirectional modeling.
Furthermore, DLMs exhibit unique synergy with parallel decoding methods \citep{Israel2025,tif,Fast-dLLM,hu2025fast}, where confidence-aware strategies are adopted to simultaneously decode multiple tokens.
Caching methods are fully compatible with these parallel frameworks. 
By integrating our adaptive caching, SPA-Cache, with these parallel frameworks, we can further amplify throughput gains.

\section{Method}
\label{03_method}

\subsection{Workflow Overview}
\label{03_00_overview}
As discussed in \cref{02_01_dlm}, the arbitrary-order decoding paradigm and the reliance on bi-directional self-attention in Diffusion Language Models (DLMs) render standard LM caching techniques inapplicable. To address this, we propose \textbf{SPA-Cache} for efficient DLM caching. 
The workflow for a single Transformer layer is partitioned into the following three distinct phases as shown in \cref{alg_method_main,fig-method_main}.
Some operations (\eg, residual connections, position embeddings) are omitted for clarity.

\begin{algorithm}[tb]
  \caption{SPA-Cache of a Transformer Block}
  \label{alg_method_main}
  \begin{algorithmic}
    \INPUT Input states $\mH$, cached KV $\mK^c$, $\mV^c$, cached FFN outputs $\mH^c$, layer index $l$, maximum update ratio $\rho_{p}$
    \OUTPUT Output states $\mH$

    \STATE \colorbox{algyellow}{\textbf{Phase 1: Update Identification \& Selection}}
    \STATE $N \leftarrow \mathrm{len}(\mH)$
    \COMMENT{Sequence length $N$}
    \STATE $\rho \leftarrow \mathrm{get\_ada\_rho}(\rho_{p}, l)$ 
    \COMMENT{\cref{eq-ada_rho} (\cref{03_03_budget})}
    \STATE $\mathcal{I} \leftarrow \mathrm{get\_update\_indices}(\mH, k=N\cdot \rho)$ 
    \COMMENT{Get update indices; \cref{alg_proxy} (\cref{03_01_identifier,03_02_proxy})}
    
    \STATE $\mH_{\mathcal{I}} \leftarrow \mathrm{gather}(\mH, \mathcal{I})$ 
    \COMMENT{$\Pi_{\mathcal{I}}$: Select states at indices $\mathcal{I}$; $\mathrm{len}(\mH_{\mathcal{I}}): k$}

    \STATE \colorbox{alggreen}{\textbf{Phase 2: Attention with Partially Cached KV}}
    \STATE $\mQ_{\mathcal{I}}, \mK_{\mathcal{I}}, \mV_{\mathcal{I}} \leftarrow \mathrm{to\_qkv}(\mH_{\mathcal{I}})$

    \STATE $\mK^c \leftarrow \mathrm{scatter}(\mK^c, \mathcal{I}, \mK_{\mathcal{I}})$
    \STATE $\mV^c \leftarrow \mathrm{scatter}(\mV^c, \mathcal{I}, \mV_{\mathcal{I}})$
    \COMMENT{$\mathrm{Upd}$: Update KV; $\mathrm{len}(\mK^c), \mathrm{len}(\mV^c): N$}

    \STATE $\mH_{\mathcal{I}} \leftarrow \mathrm{Attn}(\mQ_{\mathcal{I}}, \mK^c, \mV^c)$

    \STATE \colorbox{algpurple}{\textbf{Phase 3: FFN and Output States Update}}
    \STATE $\mH_{\mathcal{I}} \leftarrow \mathrm{FFN}(\mH_{\mathcal{I}})$

    \STATE $\mH^c \leftarrow \mH \leftarrow \mathrm{scatter}(\mH^c, \mathcal{I}, \mH_{\mathcal{I}})$
    \COMMENT{$\mathrm{Upd}$: Update output and its cache; $\mathrm{len}(\mH): N$}
  \end{algorithmic}
\end{algorithm}

\noindent
\textbf{Phase 1: Update Identification and Input States Selection.}
In this initial phase, the goal is to identify which tokens have drifted significantly to warrant recomputation. Since the drift in subsequent components is unknown beforehand, the challenge lies in determining the update priority efficiently.
We employ an update identification \cref{alg_proxy}, which will be detailed in \cref{03_01_identifier,03_02_proxy}, to select a set of update indices $\mathcal{I}$.
These indices guide the selection of sparse inputs for computation while allowing the remaining tokens to stay dormant and use cache.

\noindent
\textbf{Phase 2: Attention with Partially Cached Key-Value.}
The sparse inputs are used to generate a sparse Query $\mQ_{\mathcal{I}}$, along with Key $\mK_{\mathcal{I}}$ and Value $\mV_{\mathcal{I}}$ states exclusively for the tokens indexed in $\mathcal{I}$.
The Update ($\mathbf{\mathrm{Upd}}$) module then integrates these new KV states into the existing KV cache from previous decoding steps.
This process refreshes the representations of the active tokens while reusing cached states for stable tokens.
The Attention is computed between the sparse queries and the partially updated KV cache, yielding the sparse attention output.

\noindent
\textbf{Phase 3: FFN and Output States Update.}
The final phase handles the feed-forward transitions.
The sparse attention output is passed through a FFN component.
Similar to Phase 2, an Update ($\mathbf{\mathrm{Upd}}$) module uses the indices $\mathcal{I}$ to synchronize the newly computed output states with the cache and produce the final output states for the layer.

This selective recomputation mechanism ensures that the computational complexity of both the Attention and FFN components scales with the number of updated tokens $k$ rather than the total sequence length $N$.

\begin{figure*}[t]
\begin{center}
\centerline{\includegraphics[width=\linewidth]{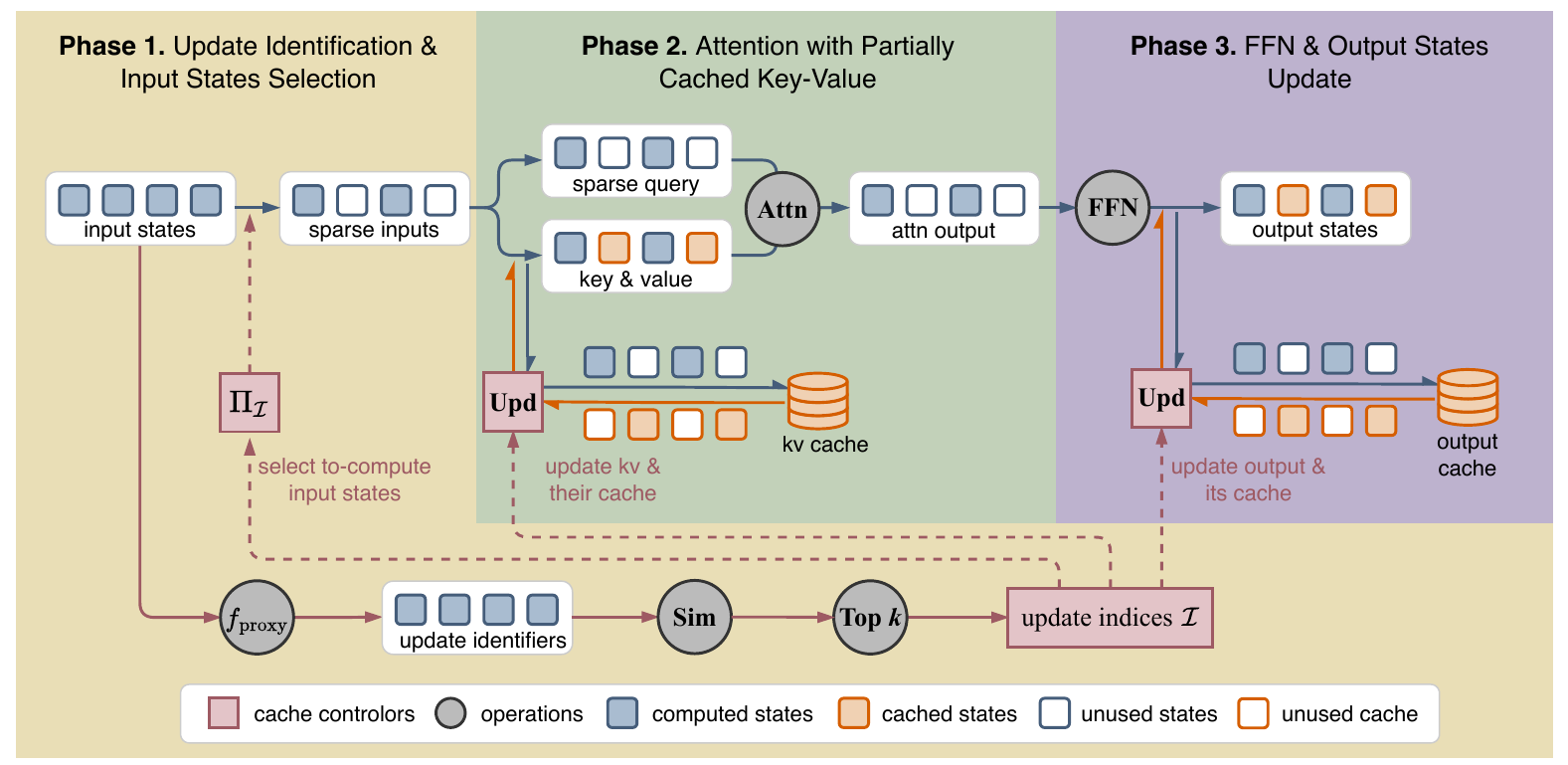}}
\caption{
    \textbf{Overview of a Single SPA-Cache Layer.}
    \textbf{Phase 1}: Input states are projected by $f_{\mathrm{proxy}}$ to produce update scores. A Top-$k$ selection identifies the update index set $\mathcal{I}$, which identifies positions whose cached states from the previous steps should be refreshed.
    \textbf{Phase 2}: Queries are generated only from the selected states and attend to a key-value cache that is selectively updated at indices $\mathcal{I}$.
    \textbf{Phase 3}: The resulting sparse attention outputs are fed into the MLP for the layer output. The output states is updated at $\mathcal{I}$, while non-selected positions reuse cached features.
}
\label{fig-method_main}
\end{center}
\vskip -0.3in
\end{figure*}

\subsection{On the Choice of Update Identifier}
\label{03_01_identifier}
\begin{algorithm}[tb]
\caption{Update Identification}
\label{alg_proxy}
\begin{algorithmic}
\INPUT Input states $\mH=[\vh_1, \ldots, \vh_N]$, cached proxy identifiers $\mP^{c}=[\vp_1^{c}, \ldots, \vp_N^{c}]$, update size $k$
\OUTPUT Update indices $\mathcal{I}$

\FOR{$i=1$ {\bfseries to} $N$}
\STATE $\vp_i \leftarrow f_{\mathrm{proxy}}(\vh_i)$ \COMMENT{Compute new identifiers}
\STATE $\vs_i \leftarrow \mathcal{S}_{\cos}(\vp_i, \vp_i^{c})$ \COMMENT{Similarity scores}
\ENDFOR
\STATE $\mP \leftarrow [\vp_1, \ldots, \vp_N]$; $\mS \leftarrow [\vs_1, \ldots, \vs_N]$
\STATE $\mathcal{I} \leftarrow \mathrm{topk}(\mS, k, \text{smallest=True})$ \COMMENT{Indices to update}

\STATE $\mP^c \leftarrow \mathrm{scatter}(\mP^c, \mathcal{I}, \mP)$
\COMMENT{Update identifiers}

\end{algorithmic}

\end{algorithm}

The core of our framework is the \textbf{update identification} mechanism in Phase 1, which selectively determines which tokens require recomputation.
As detailed in \cref{alg_proxy}, we first project the input hidden states $[\vh_i]$ into proxy identifier vectors $[\vp_i]$ via a proxy projection $f_{\mathrm{\mathrm{proxy}}}(\cdot)$. To quantify temporal changes, we compute the cosine similarity $\mathcal{S}_{\cos}(\cdot)$ between the current identifier $\vp_i$ and the cached identifier $\vp_i^c$ from the previous decoding steps. The update set $\mathcal{I}$ is then constructed by selecting the Top-$k$ tokens with the lowest similarity scores, effectively prioritizing those with the most significant state deviations.

\citet{dLLM-Cache} has empirically utilized Value states to guide the update of drifting states, \ie $\vp_i = f_{\mathrm{\mathrm{proxy}}}(\vh_i) = \mW \vh_i = \vv_i$, where $\mW \in \sR^{d\times d}$ is the Value projection matrix.
However, such design choice was motivated by experimental correlations.
In this section, we formalize the theoretical underpinnings of why Value states constitute a principled identifier for state updates. Specifically, we provide analyses of feature drift propagation through linear projections and non-linearities. Full proofs for the following theorems are provided in Appendix~\ref{app_proof}.

We first establish that the stability of the Value states is a sufficient condition for the stability of the attention output.
\begin{theorem}[\textbf{Bound on Attention Output Similarity}]
    \label{theorem_value}
    Consider the Value states $\vv_i^t \in \mathbb{R}^d$ and attention output $\vh_i^t \in \mathbb{R}^d$ for the $i$-th token at decoding steps $t$ of a DLM.
    Under the following assumptions:
    \begin{enumerate}[nosep, leftmargin=15pt]
        \item \textbf{Bounded Norms:} The Value norms $\|\vv_i\|$ and attention output norms $\|\vh_i\|$ are bounded.
        \item \textbf{Stable Attention:} The total variation of the attention weights is bounded: $\sum_j |\alpha_{ij}^{t+1} - \alpha_{ij}^t| \le \delta_A$.
        \item \textbf{Bounded Drift:} The drift of output states is controlled by its value state: $\sum_j \alpha_{ij}^{t+1} \|\mathbf{v}_j^{t+1} - \mathbf{v}_j^t\| \le \lambda \|\mathbf{v}_i^{t+1} - \mathbf{v}_i^t\|$, where $\lambda \ge 1$ is a relaxation constant.
    \end{enumerate}
    Then, the cosine dissimilarity of the output $\mathbf{h}_i$ is bounded by the dissimilarity of the $i$-th value state $\mathbf{v}_i$:
    \begin{equation}
        1 - \mathcal{S}_{\cos}(\vh_i^t, \vh_i^{t+1}) \le C \cdot (1 - \mathcal{S}_{\cos}(\vv_i^t, \vv_i^{t+1})) + \epsilon,
    \end{equation}
    where $C > 0$ is a scaling constant determined by the norm bounds, and $\epsilon > 0$ is a residual term capturing attention shifts and norm variance (detailed in Appendix \ref{app_proof_0}).
\end{theorem}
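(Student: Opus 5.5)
The plan is to work with the attention output $\vh_i^t = \sum_j \alpha_{ij}^t \vv_j^t$ and to translate between cosine dissimilarity and Euclidean distance. The basic identity is that for nonzero $\mathbf{a},\mathbf{b}$,
\[
1-\mathcal{S}_{\cos}(\mathbf{a},\mathbf{b}) = \tfrac{1}{2}\bigl\|\hat{\mathbf{a}}-\hat{\mathbf{b}}\bigr\|^2,
\]
where $\hat{\cdot}$ denotes normalization. We also use the elementary bounds
\[
\|\hat{\mathbf{a}}-\hat{\mathbf{b}}\| \le \frac{2\|\mathbf{a}-\mathbf{b}\|}{\max(\|\mathbf{a}\|,\|\mathbf{b}\|)}
\qquad\text{and}\qquad
\|\mathbf{a}-\mathbf{b}\|^2 = (\|\mathbf{a}\|-\|\mathbf{b}\|)^2 + 2\|\mathbf{a}\|\|\mathbf{b}\|\bigl(1-\mathcal{S}_{\cos}(\mathbf{a},\mathbf{b})\bigr).
\]
The first converts output drift into a Euclidean quantity. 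The second converts the Euclidean value drift back into cosine dissimilarity plus a norm-variance term. The Bounded Norms assumption is read as two-sided: $0<m_v\le\|\vv_i\|\le M_v$ and $\|\vh_i\|\ge m_h>0$. A lower bound on $\|\vh_i\|$ is needed for cosine to be controlled at all, and we state this explicitly.

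Next we decompose the output drift as
\[
\vh_i^{t+1}-\vh_i^t = \sum_j (\alpha_{ij}^{t+1}-\alpha_{ij}^t)\,\vv_j^t + \sum_j \alpha_{ij}^{t+1}\,(\vv_j^{t+1}-\vv_j^t).
\]
The first sum has norm at most $\delta_A M_v$ by the Stable Attention assumption. By the triangle inequality and Bounded Drift, the second sum has norm at most $\lambda\|\vv_i^{t+1}-\vv_i^t\|$. Hence
\[
\|\vh_i^{t+1}-\vh_i^t\| \le \lambda\|\Delta\vv_i\| + \delta_A M_v .
\]

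We then square this bound, using $(x+y)^2\le 2x^2+2y^2$, and apply the cosine/norm identity to get
\[
1-\mathcal{S}_{\cos}(\vh_i^t,\vh_i^{t+1}) \le \frac{2}{m_h^2}\bigl(2\lambda^2\|\Delta\vv_i\|^2 + 2\delta_A^2 M_v^2\bigr).
\]
We expand $\|\Delta\vv_i\|^2$ with the second identity and bound $\|\vv_i^t\|\|\vv_i^{t+1}\|\le M_v^2$. This yields the claim with $C = 8\lambda^2 M_v^2/m_h^2$ and
\[
\epsilon = \frac{4}{m_h^2}\Bigl(\lambda^2\bigl(\|\vv_i^{t+1}\|-\|\vv_i^t\|\bigr)^2 + \delta_A^2 M_v^2\Bigr).
\]
This $\epsilon$ captures exactly the attention shift and the norm variance, as the statement advertises. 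The constant factors are not tight, and we do not try to optimize them.

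The main obstacle is the cosine-to-Euclidean conversion on the output side. It needs a lower bound on $\|\vh_i\|$, which the stated assumption only implicitly provides. We will make this lower bound explicit as part of interpreting "bounded norms". If the denominator only controls one of the two norms, we will use the $\max$ form of the normalization bound to avoid needing both.
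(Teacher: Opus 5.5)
Your proof is correct and is essentially the paper's argument: the same split of $\vh_i^{t+1}-\vh_i^t$ into a weighted value drift bounded by $\lambda\|\vv_i^{t+1}-\vv_i^t\|$ and an attention shift bounded by $\delta_A V_{\max}$, with the norm--cosine identity used to convert between Euclidean distance and cosine dissimilarity. The paper differs only in bookkeeping: it uses $1-\mathcal{S}_{\cos}(\vh_i^t,\vh_i^{t+1})\le\|\Delta\vh_i\|^2/(2H_{\min}^2)$ and $\sqrt{a+b}\le\sqrt{a}+\sqrt{b}$, which gives the smaller $C=\lambda^2V_{\max}^2/H_{\min}^2$ but leaves in $\epsilon$ a cross term proportional to $\sqrt{1-\mathcal{S}_{\cos}(\vv_i^t,\vv_i^{t+1})}$; your $(x+y)^2\le 2x^2+2y^2$ route costs a factor $8$ in $C$ but gives an $\epsilon$ that is independent of the value similarity.
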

This implies that if the Value states maintain high similarity, the resulting attention output drift is upper bounded.
Thus, monitoring Value states provides a conservative guarantee for the stability of the attention output.

Next, we argue that the FFN preserves this stability.
\begin{theorem}[\textbf{Bound on FFN Output Divergence}]
    \label{theorem_mlp}
    Let $f_{\mathrm{FFN}}(\cdot)$ be a standard FFN component comprising linear projections, RMSNorm \citep{rmsnorm}, and SiLU \citep{silu} activation functions.
    Let $\vh_1, \vh_2 \in \mathbb{R}^d$ be two FFN input vectors.
    The divergence of the FFN outputs is bounded by the cosine similarity of their inputs:
    \begin{equation}
        \| f_{\mathrm{FFN}}(\vh_1) - f_{\mathrm{FFN}}(\vh_2) \|_2 \le C \cdot \sqrt{1 - \mathcal{S}_{\cos}(\vh_1, \vh_2)} + \epsilon,
    \end{equation}
    where $C>0$ and $\epsilon >0$ are constants depending on the Lipschitz constant of the network and the feature norms.
\end{theorem}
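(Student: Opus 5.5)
The plan is to compose two facts. First, the FFN is Lipschitz on the relevant domain. Second, the Euclidean distance between two vectors is controlled by their cosine dissimilarity plus a term measuring the difference in their norms.

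\textbf{Step 1: identity for the distance.} For $\vh_1,\vh_2\neq 0$ with norms $r_1=\|\vh_1\|$ and $r_2=\|\vh_2\|$,
\begin{equation*}
\|\vh_1-\vh_2\|^2 = (r_1-r_2)^2 + 2r_1r_2\bigl(1-\mathcal{S}_{\cos}(\vh_1,\vh_2)\bigr).
\end{equation*}
Using $\sqrt{a+b}\le\sqrt a+\sqrt b$, this gives
\begin{equation*}
\|\vh_1-\vh_2\| \le \sqrt{2r_1r_2}\,\sqrt{1-\mathcal{S}_{\cos}(\vh_1,\vh_2)} + |r_1-r_2|.
\end{equation*}

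\textbf{Step 2: Lipschitz bound on the FFN.} Write $f_{\mathrm{FFN}}(\vh)=\mW_d\bigl(\mathrm{SiLU}(\mW_g\,\mathrm{RMSNorm}(\vh))\odot \mW_u\,\mathrm{RMSNorm}(\vh)\bigr)$, or the ungated variant. Each component is handled separately:
\begin{enumerate*}[(i)]
\item linear maps are Lipschitz with constant $\|\mW\|_2$;
\item SiLU is Lipschitz with constant about $1.1$;
\item RMSNorm $\vh\mapsto \sqrt d\,\vh/\|\vh\|\odot\gamma$ is Lipschitz with constant $O(\sqrt d\,\|\gamma\|_\infty/r_{\min})$ on the region $\|\vh\|\ge r_{\min}>0$.
\end{enumerate*}
RMSNorm is a rescaled projection onto the sphere, and the radial projection $\vh\mapsto\vh/\|\vh\|$ is $(1/r_{\min})$-Lipschitz outside the ball of radius $r_{\min}$.

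The gated product is the only bilinear piece. It is locally Lipschitz because both factors are bounded: after RMSNorm, every input has norm $\sqrt d\|\gamma\|$, so both branches stay in a compact set. Composing these bounds gives a constant $L$ with $\|f(\vh_1)-f(\vh_2)\|\le L\|\vh_1-\vh_2\|$.

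\textbf{A sharper route through RMSNorm.} Because RMSNorm discards the norm, we can bypass it:
\begin{equation*}
\|\mathrm{RMSNorm}(\vh_1)-\mathrm{RMSNorm}(\vh_2)\| \le \sqrt d\,\|\gamma\|_\infty\,\bigl\|\hat\vh_1-\hat\vh_2\bigr\| = \sqrt d\,\|\gamma\|_\infty\sqrt{2\bigl(1-\mathcal{S}_{\cos}\bigr)},
\end{equation*}
where $\hat\vh=\vh/\|\vh\|$. This is exact, with no norm term, when normalization sits at the FFN input, as in pre-norm blocks. The residual $\epsilon$ then only has to absorb contributions that bypass the norm, such as a residual connection $\vh\mapsto \vh+f(\vh)$, where the $|r_1-r_2|$ term from Step 1 reappears. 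Either way, we combine Steps 1 and 2 and set $C=L\sqrt{2}\,\max(\sqrt{r_1r_2},\sqrt d\|\gamma\|_\infty)$ and $\epsilon=L\,|r_1-r_2|$, bounded via the assumed feature-norm bounds.

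\textbf{Main obstacle.} The delicate step is the Lipschitz constant of the gated SiLU product together with RMSNorm's singularity at the origin. Both force local constants, requiring a lower bound $r_{\min}$ on input norms and a bound on post-norm activations.

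I would first use the sphere-projection argument above, which makes RMSNorm globally harmless and bounds every subsequent activation by $\sqrt d\|\gamma\|$. The product-rule estimate $\|a_1\odot b_1-a_2\odot b_2\|\le\|a_1\|_\infty\|b_1-b_2\|+\|b_2\|_\infty\|a_1-a_2\|$ then closes the argument with explicit constants.
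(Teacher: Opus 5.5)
Your proof is correct, and its backbone is the paper's. The paper also treats $f_{\mathrm{FFN}}$ as $L$-Lipschitz (a composition of linear maps, activations and RMSNorm on inputs bounded away from zero), uses the identity $\|\vh_1-\vh_2\|^2=(\|\vh_1\|-\|\vh_2\|)^2+2\|\vh_1\|\|\vh_2\|(1-\mathcal{S}_{\cos}(\vh_1,\vh_2))$ with norms confined to $[H_{\min},H_{\max}]$, and applies $\sqrt{a+b}\le\sqrt a+\sqrt b$. This gives $C=\sqrt2\,LH_{\max}$ and $\epsilon=L(H_{\max}-H_{\min})$.

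You go beyond the paper in two places.

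First, the gated variant you write down contains the elementwise product of the gate and up branches. That product is not among the paper's Lipschitz pieces (linear maps, activations, RMSNorm) and is only Lipschitz on bounded sets. Your observation that RMSNorm maps every input into a ball of radius $\sqrt d\|\gamma\|_\infty$ is what actually justifies this step, together with your product-rule estimate. (This is an upper bound, not the exact norm $\sqrt d\|\gamma\|$ you state, but the bound is all you need.)

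Second, your sharper route pushes the cosine through the normalization rather than through $\|\vh_1-\vh_2\|$. A pre-norm FFN without residual satisfies $f_{\mathrm{FFN}}(c\vh)=f_{\mathrm{FFN}}(\vh)$ for $c>0$. So you get $C=\sqrt{2d}\,L_g\|\gamma\|_\infty$, where $L_g$ is the Lipschitz constant of the post-normalization part. There is no additive term, and no norm band is needed beyond $\vh_i\neq 0$. By contrast, the paper's $\epsilon$ stays positive even for parallel inputs, whose outputs are identical. Its $C$ also implicitly inherits a factor $H_{\max}/H_{\min}$ from RMSNorm's Lipschitz constant.

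The trade-off: the paper's argument is shorter and uses nothing about $f_{\mathrm{FFN}}$ beyond Lipschitz continuity. Yours exploits scale invariance and shows that $\epsilon$ is genuinely needed only for the residual path.

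One bookkeeping point. With the residual included, the two contributions add, giving $C=\sqrt2(\sqrt{r_1r_2}+L_g\sqrt d\|\gamma\|_\infty)$ and $\epsilon=|r_1-r_2|$. Your single expression $L\sqrt2\max(\sqrt{r_1r_2},\sqrt d\|\gamma\|_\infty)$ is therefore safe only if $L$ denotes the Lipschitz constant of the whole FFN. In that case it is valid but no sharper than the paper's constant.
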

\begin{remark}
    More generally, this theorem holds for normalizations that project onto a bounded hypersphere (\eg, LayerNorm \citep{layernorm}) and Lipschitz-continuous activations (\eg, GELU \citep{gelu}).
    The bounding property can also be extended to the more recent MoE \citep{moe} as they are intrinsically composed of standard FFNs.
\end{remark}
\cref{theorem_value} and \cref{theorem_mlp} together provide the theoretical guarantee for the Value identification mechanism: if the Value states maintain high cosine similarity, the final layer output drifts are bounded.
Consequently, it is unnecessary (and impractical) to speculatively evaluate the whole layer to detect divergence; the similarity of Value states act as a sufficient proxy for determining which tokens can safely reuse cached states.

\begin{table}[t]
\caption{
    Comparison of different identifier types on \textbf{LLaDA-8B-Instruct} \citep{llada} and GSM8K \citep{gsm8k}. TPS: Tokens Per Second, TTFT: Time To First Token (in milliseconds).
}
\label{tab-method_indicator}
\vskip -0.15in
\begin{center}
\begin{small}
\begin{sc}

\resizebox{\columnwidth}{!}{%
\begin{tabular}{l c c c}
\toprule
Identifier type & TPS $\uparrow$ & TTFT $\downarrow$ & Accuracy $\uparrow$  \\
\midrule
\textbf{Baseline (None)} & 29.01 & 28.6 & $78.62\ _{(\pm 1.13)}$ \\
Query & 164.36 & 28.4 & $77.21\ _{(\pm 1.15)}$ \\
Key & 164.11 & 28.1 & $76.83\ _{(\pm 1.15)}$ \\
\rowcolor{tabblue!20}
Value & 164.88 & 28.1 & \best{78.59\ _{(\pm 1.13)}} \\
Attn. Input & \best{172.47} & 28.6 & $77.29\ _{(\pm 1.14)}$ \\
Attn. Output & 139.18 & 28.0 & $73.92\ _{(\pm 1.21)}$ \\
\bottomrule
\end{tabular}
}

\end{sc}
\end{small}
\end{center}
\vskip -0.1in
\end{table}

\noindent
\textbf{Emperical Evaluation of the Identifiers.}
To validate our theoretical findings, we perform an empirical comparison of various hidden state components as update identifiers in \cref{tab-method_indicator}. Specifically, we evaluate the efficacy when using Query, Key, Value, attention input, and attention output states as update identifiers.
As demonstrated in \cref{tab-method_indicator}, the Value state outperforms all other candidates, providing the most reliable signal for hidden state drift.

Query and Key serve as routing features; while shifts in their values may alter the attention score distribution, the resulting output remains stable if the attended Value vectors are semantically equivalent.
The attention input state is an entangled representation of Query, Key, and Value states, making it less effective for isolating drift.
Notably, using attention output states as identifiers yields the most significant accuracy degradation.
This limitation stems from the Anisotropy Masking Effect \citep{Ethayarajh19,GaoHTQWL19,TimkeyS21}, in which the representation space of deeper features collapses into a narrow cone. This dominance of a few directions masks the unique features necessary for robust identification.
We provide an in-depth analysis of this failure mode in Appendix~\ref{app_failure}.

\subsection{Singular Proxy Identifier}
\label{03_02_proxy}
Although the Value states are justified as a reliable identifier for update selection, projecting the whole input sequence to Values and computing similarity in the $d$-dimensional Value space are computationally expensive.
In moderate-parameterized DLMs such as LLaDA-8B ($d=4096$), the identification process in this $d$-dimensional space incurs an overhead that can offset the gains from sparse computation.

To address this challenge, we introduce the \textbf{singular proxy}, a method that designed to identify update-critical tokens within a low-dimensional subspace $r \ll d$.
We perform the Singular Value Decomposition (SVD) \citep{svd} on the Value projection matrix $\mW \in \mathbb{R}^{d \times d}$ such that $\mW \approx \mU \mathbf{\Lambda} \mV^\top$, where $\mU, \mV \in \mathbb{R}^{d \times d}$ denote the left and right singular vectors, and $\mathbf{\Lambda} = \mathrm{diag}(\lambda_1, \ldots, \lambda_d) \in \mathbb{R}^{d \times d}$ represents the singular values in descending order.
We define a truncated projection $\mW_{r} \in \mathbb{R}^{r \times d}$ by retaining the top $r$ right singular vectors and singular values. This yields the dimension-reduction projection, $f_{\mathrm{\mathrm{proxy}}}: \mathbb{R}^d \to \mathbb{R}^{r}$:
\begin{equation}
    f_{\mathrm{\mathrm{proxy}}}(\vh_i) 
    = \mW_{r} \vh_i
    = (\mathbf{\Lambda}_{r} \mV_{r}^\top) \vh_i,
\end{equation}
where $\mathbf{h}_i \in \mathbb{R}^d$ is the $i$-th input state.
By projecting the input into the principal subspace, we reduce the computational complexity of the identifier mapping $f_{\mathrm{\mathrm{proxy}}}(\cdot)$ from $O(d^3)$ to $O(rd^2)$ and the similarity computation $\mathcal{S}_{\cos}(\cdot)$ from $O(d)$ to $O(r)$.
The remaining pipeline still follows the procedure outlined in \cref{alg_proxy}.

While truncated SVD preserves the majority of variance, it is critical to ensure that the topological structure remains intact within the reduced subspace. To formalize this, we establish the following bound on similarity divergence:
\begin{theorem}[\textbf{Similarity Preservation in the Truncated Projection}]
    \label{theorem_svd}
    Let $\vh_1, \vh_2 \in \mathrm{span}(\mV_{r})$ be the input states. 
    Denote $\vv = \mW\vh$ and $\hat{\vv} = \mW_r \vh$ as the Value states and our singular proxy, respectively.
    Their divergence of the cosine similarity is bounded by a factor independent of the input states $\vh_1, \vh_2$:
    \begin{equation}
    |\mathcal{S}_{\cos}(\vv_1, \vv_2) - \mathcal{S}_{\cos}(\hat{\vv}_1, \hat{\vv}_2)| 
    \leq 2\left(\frac{\lambda_{r+1}}{\lambda_r}\right)^2.
\end{equation}
\end{theorem}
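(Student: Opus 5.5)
The plan is to work in the right-singular coordinate system of $\mW$, where the Value map and the proxy differ only by an isometry on the head subspace and by the tail singular directions. Write $\mW = \mU\mathbf{\Lambda}\mV^\top$ (taking the SVD as exact) and, for an input $\vh$, set $\vc = \mV^\top\vh$. Split $\vc = (\mathbf{a}, \mathbf{b})$ with $\mathbf{a}\in\sR^r$ the head coordinates and $\mathbf{b}\in\sR^{d-r}$ the tail coordinates. Let $\mathbf{\Lambda}_r$ and $\mathbf{\Lambda}_{\perp}$ be the corresponding diagonal blocks. Then
\begin{equation*}
\vv = \mU_r\mathbf{\Lambda}_r\mathbf{a} + \mU_{\perp}\mathbf{\Lambda}_{\perp}\mathbf{b}, \qquad \hat{\vv} = \mathbf{\Lambda}_r\mathbf{a}.
\end{equation*}
Because $\mU$ has orthonormal columns, we get
\begin{equation*}
\langle \vv_1,\vv_2\rangle = \langle \mathbf{\Lambda}_r\mathbf{a}_1,\mathbf{\Lambda}_r\mathbf{a}_2\rangle + \langle \mathbf{\Lambda}_\perp\mathbf{b}_1,\mathbf{\Lambda}_\perp\mathbf{b}_2\rangle, \qquad \|\vv\|^2 = \|\hat{\vv}\|^2 + \|\mathbf{\Lambda}_\perp\mathbf{b}\|^2 .
\end{equation*}
Every step below compares these "head $+$ tail" expressions with their head-only counterparts.

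\textbf{Case in the statement.} Under the hypothesis $\vh_1,\vh_2\in\mathrm{span}(\mV_r)$, we have $\mathbf{b}_1=\mathbf{b}_2=\mathbf{0}$. The tail terms vanish, so $\vv_j = \mU_r\hat{\vv}_j$ with $\mU_r$ an isometry. Inner products and norms are therefore preserved exactly, which gives $\mathcal{S}_{\cos}(\vv_1,\vv_2)=\mathcal{S}_{\cos}(\hat{\vv}_1,\hat{\vv}_2)$. The divergence is $0\le 2(\lambda_{r+1}/\lambda_r)^2$, and the theorem follows immediately. I would state this first, because it is the literal content of the claim.

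\textbf{Robust version.} To make the bound meaningful, I would also treat inputs with a nonzero tail under a mild assumption, for instance $\|\mathbf{b}_j\|\le\|\mathbf{a}_j\|$ (the input is dominated by its head component). Define
\begin{equation*}
\eta_j = \|\mathbf{\Lambda}_\perp\mathbf{b}_j\|^2/\|\hat{\vv}_j\|^2 .
\end{equation*}
Since $\|\mathbf{\Lambda}_\perp\mathbf{b}_j\|\le\lambda_{r+1}\|\mathbf{b}_j\|$ and $\|\hat{\vv}_j\|\ge\lambda_r\|\mathbf{a}_j\|$, this gives $\eta_j\le(\lambda_{r+1}/\lambda_r)^2$. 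Writing $s=\mathcal{S}_{\cos}(\hat{\vv}_1,\hat{\vv}_2)$, the full cosine is
\begin{equation*}
\frac{\|\hat{\vv}_1\|\|\hat{\vv}_2\|\,s + \langle\mathbf{\Lambda}_\perp\mathbf{b}_1,\mathbf{\Lambda}_\perp\mathbf{b}_2\rangle}{\|\hat{\vv}_1\|\|\hat{\vv}_2\|\sqrt{(1+\eta_1)(1+\eta_2)}} .
\end{equation*}
By Cauchy--Schwarz, the cross tail term is at most $\sqrt{\eta_1\eta_2}$ after normalization.

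The difference from $s$ then splits into two pieces:
\begin{equation*}
|s|\left(1-\tfrac{1}{\sqrt{(1+\eta_1)(1+\eta_2)}}\right) \le \tfrac12(\eta_1+\eta_2) \qquad\text{and}\qquad \sqrt{\eta_1\eta_2} \le \tfrac12(\eta_1+\eta_2).
\end{equation*}
Each piece is at most $(\lambda_{r+1}/\lambda_r)^2$, so the sum is at most $2(\lambda_{r+1}/\lambda_r)^2$, matching the stated constant.

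The main obstacle is this robust step: controlling the normalization factor and the cross term cleanly enough to land exactly on the factor $2$. To do this I would use $1-(1+x)^{-1/2}\le x/2$ together with AM--GM. I would also stress that in the exact-span setting of the theorem this step is unnecessary.
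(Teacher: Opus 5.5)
Your proof is correct, but for the literal statement it takes a more direct route than the paper. You note that for $\vh\in\mathrm{span}(\mV_r)$ one has $\mW\vh=\mU_r\mW_r\vh$ exactly. The Value state is therefore an isometric image of the proxy, and the left-hand side is identically $0$.

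The paper instead proves a general perturbation lemma for arbitrary inputs:
\begin{itemize}
\item It writes $\mW=\mU_r\mathbf{\Lambda}_r\mV_r^\top+\mR$ with $\mR^\top\mU_r=0$.
\item It splits the cosine difference into a normalization term, bounded via $1/\sqrt{1+x}\ge 1-x/2$, and a residual cross term, bounded by Cauchy--Schwarz.
\item It uses $\|\mR\vh\|\le\lambda_{r+1}\|\vh\|$ to obtain $\tfrac12(\lambda_{r+1}/\lambda_r)^2\bigl(\|\vh_1\|/\|\mV_r^\top\vh_1\|+\|\vh_2\|/\|\mV_r^\top\vh_2\|\bigr)^2$.
\item Only at the end does it invoke the span hypothesis, to make both ratios equal to $1$.
\end{itemize}
The nonzero constant $2(\lambda_{r+1}/\lambda_r)^2$ is an artifact of the estimate $\|\mR\vh\|\le\lambda_{r+1}\|\vh\|$. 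On that subspace in fact $\mR\vh=\mathbf{0}$. So your argument gives the sharp conclusion. The paper's lemma is what carries content off the subspace, and it needs no head-dominance assumption: the bound simply degrades with $\|\vh\|/\|\mV_r^\top\vh\|$.

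Your robust version is essentially the paper's lemma written in singular coordinates, with the same two-piece split. Your tail estimate $\|\mathbf{\Lambda}_\perp\mathbf{b}\|\le\lambda_{r+1}\|\mathbf{b}\|$ is sharper than the paper's, though. Under $\|\mathbf{b}_j\|\le\|\mathbf{a}_j\|$ it yields $2(\lambda_{r+1}/\lambda_r)^2$. The paper's lemma would give the weaker $4(\lambda_{r+1}/\lambda_r)^2$ in that regime, since then $\|\vh_j\|/\|\mathbf{a}_j\|\le\sqrt2$.

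One detail in that step: apply $1-(1+x)^{-1/2}\le x/2$ to each factor separately, and combine using $1-ab\le(1-a)+(1-b)$ for $a,b\in[0,1]$. This matches the paper multiplying the two Bernoulli bounds and dropping the positive cross term. A single application with $x=\eta_1+\eta_2+\eta_1\eta_2$ leaves an extra $\tfrac12\eta_1\eta_2$ and misses the factor $2$.
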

\begin{remark}
The $\vh_1, \vh_2 \in \mathrm{span}(\mV_r)$ assumption typically holds as the cumulative update to each neuron can be viewed as a weighted summation of the training input vectors \citep{ChengXWZY25}.
The rigorous proof is provided in Appendix~\ref{app_proof_svd}.
\end{remark}
By leveraging \cref{theorem_svd}, we can analytically ensure the singular proxy is an effective alternative for the Value proxy.
This allows for computational reductions during the update identification process.

\subsection{Adaptive Budget Allocation}
\label{03_03_budget}
\begin{table*}[t]
\caption{
    Comparison of LLaDA-8B-Instruct \citep{llada} and Dream-v0-Instruct-7B \citep{dream} across seven benchmarks.
    Compared to prior dLLM-Cache \citep{dLLM-Cache} and Fast-dLLM \citep{Fast-dLLM}, our SPA-Cache achieves better speedups while maintaining comparable accuarcy.
}
\label{tab-exp_main_llada}
\vskip -0.15in
\begin{center}
\begin{small}
\begin{sc}

\resizebox{\linewidth}{!}{%
\begin{tabular}{l l c c c c c c}
\toprule
\multirow{2}{*}{Task} & \multirow{2}{*}{Method} & \multicolumn{3}{c}{\textbf{LLaDA-8B-Instruct}} & \multicolumn{3}{c}{\textbf{Dream-v0-Instruct-7B}} \\
\cmidrule(lr){3-5} \cmidrule(lr){6-8}
& & TPS $\uparrow$ & TTFT (ms) $\downarrow$ & Accuracy $\uparrow$ & TPS $\uparrow$ & TTFT (ms) $\downarrow$ & Accuracy $\uparrow$ \\
\midrule
\multicolumn{8}{c}{Mathematics \& Science} \\
\midrule
& Baseline & $29.67\ _{(1.0 \times)}$ & 28.0 & $78.62\ _{(\pm 1.13)}$ & $17.86\ _{(1.0 \times)}$ & 23.6 & $75.21\ _{(\pm 1.19)}$ \\
& + dLLM-Cache & $68.62\ _{(2.3 \times)}$ & 28.7 & $78.24\ _{(\pm 1.14)}$ & $23.39\ _{(1.3 \times)}$ & 25.3 & $74.45\ _{(\pm 1.20)}$ \\
& + Fast-dLLM & $93.86\ _{(3.2 \times)}$ & 28.1 & $76.80\ _{(\pm 1.16)}$ & $25.53\ _{(1.4 \times)}$ & 24.6 & $79.23\ _{(\pm 1.12)}$ \\
\rowcolor{tabblue!20} 
\cellcolor{white}
\multirow{-4}{*}{GSM8K}
& + Ours & \best{190.73\ _{(6.4 \times)}} & 28.7 & $78.24\ _{(\pm 1.14)}$ & \best{45.97\ _{(2.6 \times)}} & 24.5 & $77.56\ _{(\pm 1.15)}$ \\

\midrule
& Baseline & $20.55\ _{(1.0 \times)}$ & 44.8 & $29.91\ _{(\pm 2.17)}$ & $12.89\ _{(1.0 \times)}$ & 61.2 & $34.15\ _{(\pm 2.24)}$ \\
& + dLLM-Cache & $27.29\ _{(1.3 \times)}$ & 44.9 & $29.01\ _{(\pm 2.15)}$ & $25.40\ _{(2.0 \times)}$ & 61.4 & $33.04\ _{(\pm 2.22)}$ \\
& + Fast-dLLM & $25.91\ _{(1.3 \times)}$ & 44.4 & $27.23\ _{(\pm 2.11)}$ & $30.63\ _{(2.4 \times)}$ & 61.3 & $33.93\ _{(\pm 2.24)}$ \\
\rowcolor{tabblue!20} 
\cellcolor{white}
\multirow{-4}{*}{GPQA}
& + Ours & \best{39.19\ _{(1.9 \times)}} & 44.6 & $30.58\ _{(\pm 2.18)}$ & \best{45.97\ _{(3.6 \times)}} & 61.8 & $34.60\ _{(\pm 2.25)}$ \\

\midrule
& Baseline & $33.35\ _{(1.0 \times)}$ & 23.3 & $33.18\ _{(\pm 0.62)}$ & $23.08\ _{(1.0 \times)}$ & 20.2 & $36.16\ _{(\pm 0.63)}$ \\
& + dLLM-Cache & $74.26\ _{(2.2 \times)}$ & 23.2 & $33.14\ _{(\pm 0.62)}$ & $42.08\ _{(1.8 \times)}$ & 21.9 & $32.86\ _{(\pm 0.63)}$ \\
& + Fast-dLLM & $85.94\ _{(2.6 \times)}$ & 23.2 & $32.02\ _{(\pm 0.62)}$ & $89.05\ _{(3.9 \times)}$ & 20.9 & $35.04\ _{(\pm 0.63)}$ \\
\rowcolor{tabblue!20} 
\cellcolor{white}
\multirow{-4}{*}{MATH500}
& + Ours & \best{172.19\ _{(5.2 \times)}} & 24.8 & $33.44\ _{(\pm 0.62)}$ & \best{104.23\ _{(4.5 \times)}} & 20.8 & $34.84\ _{(\pm 0.63)}$ \\

\midrule
\multicolumn{8}{c}{General QA} \\
\midrule
& Baseline & $24.85\ _{(1.0 \times)}$ & 16.2 & $53.49\ _{(\pm 0.54)}$ & $49.30\ _{(1.0 \times)}$ & 14.0 & $58.42\ _{(\pm 0.52)}$ \\
& + dLLM-Cache & $40.48\ _{(1.6 \times)}$ & 16.5 & $52.74\ _{(\pm 0.55)}$ & $54.65\ _{(1.1 \times)}$ & 14.0 & $58.22\ _{(\pm 0.52)}$ \\
& + Fast-dLLM & $105.82\ _{(4.3 \times)}$ & 17.0 & $54.66\ _{(\pm 0.56)}$ & $123.05\ _{(2.5 \times)}$ & 13.2 & $57.79\ _{(\pm 0.52)}$ \\
\rowcolor{tabblue!20} 
\cellcolor{white}
\multirow{-4}{*}{BBH}
& + Ours & \best{108.31\ _{(4.4 \times)}} & 16.1 & $53.72\ _{(\pm 0.55)}$ & \best{153.61\ _{(3.1 \times)}} & 13.2 & $58.52\ _{(\pm 0.52)}$ \\

\midrule
& Baseline & $20.68\ _{(1.0 \times)}$ & 43.8 & $37.08\ _{(\pm 0.43)}$ & $10.06\ _{(1.0 \times)}$ & 33.9 & $45.01\ _{(\pm 0.44)}$ \\
& + dLLM-Cache & $52.71\ _{(2.5 \times)}$ & 43.5 & $36.50\ _{(\pm 0.43)}$ & $15.53\ _{(1.5 \times)}$ & 34.3 & $44.16\ _{(\pm 0.44)}$ \\
& + Fast-dLLM & $81.25\ _{(3.9 \times)}$ & 44.1 & $37.28\ _{(\pm 0.43)}$ & $50.58\ _{(5.0 \times)}$ & 34.2 & $46.62\ _{(\pm 0.44)}$ \\
\rowcolor{tabblue!20} 
\cellcolor{white}
\multirow{-4}{*}{MMLU-pro}
& + Ours & \best{124.06\ _{(6.0 \times)}} & 44.2 & $36.30\ _{(\pm 0.43)}$ & \best{62.13\ _{(6.2 \times)}} & 35.0 & $45.16\ _{(\pm 0.44)}$ \\

\midrule
\multicolumn{8}{c}{Code} \\
\midrule
& Baseline & $5.75\ _{(1.0 \times)}$ & 28.7 & $39.20$ & $36.51\ _{(1.0 \times)}$ & 22.2 & $57.40$ \\
& + dLLM-Cache & $8.38\ _{(1.5 \times)}$ & 28.8 & $39.00$ & $50.31\ _{(1.4 \times)}$ & 21.9 & $56.20$ \\
& + Fast-dLLM & $12.49\ _{(2.2 \times)}$ & 28.6 & $37.80$ & $58.42\ _{(1.6 \times)}$ & 21.7 & $51.00$ \\
\rowcolor{tabblue!20} 
\cellcolor{white}
\multirow{-4}{*}{MBPP}
& + Ours & \best{46.12\ _{(8.0 \times)}} & 28.7 & $39.00$ & \best{114.85\ _{(3.1 \times)}} & 22.6 & $57.60$ \\

\midrule
& Baseline & $37.48\ _{(1.0 \times)}$ & 14.4 & $42.07$ & $28.00\ _{(1.0 \times)}$ & 14.0 & $58.54$ \\
& + dLLM-Cache & $40.29\ _{(1.1 \times)}$ & 14.3 & $39.63$ & $21.15\ _{(0.8 \times)}$ & 14.1 & $45.12$ \\
& + Fast-dLLM & $81.90\ _{(2.2 \times)}$ & 14.3 & $41.71$ & $40.34\ _{(1.4 \times)}$ & 13.8 & $53.05$ \\
\rowcolor{tabblue!20} 
\cellcolor{white}
\multirow{-4}{*}{HumanEval}
& + Ours & \best{132.91\ _{(3.5 \times)}} & 14.3 & $42.07$ & \best{42.49\ _{(1.5 \times)}} & 13.4 & $58.54$ \\

\bottomrule
\end{tabular}
}

\end{sc}
\end{small}
\end{center}
\end{table*}

While the singular proxy identifier in \cref{03_02_proxy} reduces per-token identification overhead, the layer's computation is also governed by the update ratio $\rho$ (\ie, the fraction of tokens updated).
Prior method \citep{dLLM-Cache} typically employs a uniform budget allocation, maintaining a static update ratio (\eg, $\rho=0.25$) across all layers.
However, our analysis of hidden state evolution reveals significant layer-wise heterogeneity that renders uniform allocation less efficient.

As shown in \cref{fig-method_budget}, we evaluate the average ratio of highly drifting tokens per layer (defined as tokens whose cosine similarity with last decoding step falls below a threshold $\tau = 0.95$).
This analysis, conducted over 100 random samples from GSM8K \citep{gsm8k}, MMLU-pro \citep{mmlu}, and MBPP \citep{mbpp}, demonstrates a consistent distribution across diverse tasks.
Specifically, we observe that state drift varies drastically across the model depth: 
\begin {enumerate*}[(1)]%
    \item Hidden states remain relatively stable as they primarily perform initial embedding transformations in the early layers;
    \item Drift peaks as the model performs complex semantic transformations in the middle layers, requiring more frequent updates;
    \item Dynamics stabilize again as approaching the final layer of the model.
\end {enumerate*}
A fixed update threshold (\eg, $\rho=0.25$, represented by the dashed line in \cref{fig-method_budget}) essentially over-allocates budget to the stable early layers and late layers.

To exploit this heterogeneity, SPA-Cache replaces rigid allocation with an adaptive budget allocation scheme. Noticing that the drift curve roughly follows an asymmetric bell shape, we parameterize the layer-wise dynamic update ratio $\rho(l)$ using a piecewise Gaussian function:
\begin{equation}
\label{eq-ada_rho}
\rho(l) =
\begin{cases}
    \rho_p \cdot \exp{\left(
        \ln{\left(\frac{\rho_1}{\rho_p}\right)} \cdot 
        \left(\frac{l - l_p}{l_p - 1}\right)^2
    \right)}
    & \text{if } l \le l_p \\
    \rho_p \cdot \exp{\left(
        \ln{\left(\frac{\rho_L}{\rho_p}\right)} \cdot 
        \left(\frac{l - l_p}{L - l_p}\right)^2
    \right)}
    & \text{if } l > l_p
\end{cases},
\end{equation}
where, $l$ denotes the current layer index.
$L$ is the total layer nubmer of the model.
$\rho_p$ is the peak update ratio at layer $l_p$. And $\rho_1, \rho_L$ represent the boundary update ratios for the first and last layers, respectively.
This parameterization provides the flexibility to adapt to various DLMs with different skewness in their profiles.
More analysis and configurations are detailed in Appendix~\ref{app_budget}.

The dynamic strategy ensures that computational resources are concentrated on high-variance layers where updates are most critical for preserving accuracy, while aggressively caching stable layers to maximize throughput.

\section{Experiments}
\label{04_exp}

\subsection{Experimental Setup}
\noindent
\textbf{Models and Benchmarks.}
We evaluate SPA-Cache across two representative Diffusion Language Models (DLMs): {LLaDA-8B} \citep{llada} and {Dream-7B} \citep{dream}.
To assess generalizability, we conduct evaluations on seven diverse benchmarks covering mathematical reasoning ({GSM8K} \citep{gsm8k}, {MATH500} \citep{math}), scientific knowledge ({GPQA} \citep{gpqa}), general-purpose QA ({BBH} \citep{bbh}, {MMLU-pro} \citep{mmlu}), and code generation ({MBPP} \citep{mbpp}, {HumanEval} \citep{humaneval}). 

\noindent
\textbf{Baselines.}
We compare SPA-Cache against vanilla DLM decoding (no cache) and two state-of-the-art caching frameworks: {dLLM-Cache} \citep{dLLM-Cache} and {Fast-dLLM} \citep{Fast-dLLM}.

\noindent
\textbf{Implementation Details.}
All experiments are executed on a single NVIDIA B200 GPU.
For all experiments, we adopt the configuration from \citet{} and set the allocation hyperparameter to $\rho_p=0.25$.
We report {accuracy} (for QA/Reasoning) and {pass@1} (for coding tasks).
To quantify efficiency gains, we measure average {Tokens Per Second (TPS)} and {Time-To-First-Token (TTFT)}. 
Further implementation details and additional results on the post-trained LLaDA-1.5 \citep{llada-1.5} are provided in Appendix \ref{app_exp} and \ref{app_exp_1.5}, respectively.

\begin{table}[t]
\caption{
    \textbf{Integration of SPA-Cache with parallel strategy} \citep{Fast-dLLM}. Our approach outperforms the Fast-dLLM dual cache in throughput while preserving comparable accuracy.
}
\label{tab-exp_parallel_llada}
\vskip -0.15in
\begin{center}
\begin{small}
\begin{sc}

\resizebox{\columnwidth}{!}{%
\begin{tabular}{l l c c}
\toprule
Task & Method & TPS $\uparrow$ & Accuracy $\uparrow$  \\

\midrule
\multicolumn{4}{c}{Mathematics \& Science} \\
\midrule
& Baseline & 29.67 $\ _{(1.0\times)}$ & $78.62\ _{(\pm 1.13)}$ \\
& + Fast-dLLM & 176.45 $\ _{(5.9\times)}$ & $77.02\ _{(\pm 1.16)}$ \\
\rowcolor{tabblue!20} 
\cellcolor{white}
\multirow{-3}{*}{GSM8K}
& + Ours & 276.39 $\ _{(9.3\times)}$ & $76.81\ _{(\pm 1.18)}$ \\

\midrule
& Baseline & 20.55 $\ _{(1.0\times)}$ & $29.91\ _{(\pm 2.17)}$ \\
& + Fast-dLLM & 44.40 $\ _{(2.2\times)}$ & $27.68\ _{(\pm 2.12)}$ \\
\rowcolor{tabblue!20} 
\cellcolor{white}
\multirow{-3}{*}{GPQA}
& + Ours & 47.82 $\ _{(2.3\times)}$ & $27.01\ _{(\pm 2.10)}$ \\

\midrule
& Baseline & 33.35 $\ _{(1.0\times)}$ & $33.18\ _{(\pm 0.62)}$ \\
& + Fast-dLLM & 86.69 $\ _{(2.6\times)}$ & $31.76\ _{(\pm 0.62)}$ \\
\rowcolor{tabblue!20} 
\cellcolor{white}
\multirow{-3}{*}{MATH500}
& + Ours & 289.71 $\ _{(8.7\times)}$ & $32.66\ _{(\pm 0.63)}$ \\

\midrule
\multicolumn{4}{c}{General QA} \\
\midrule
& Baseline & 24.85 $\ _{(1.0\times)}$ & $51.49\ _{(\pm 0.54)}$ \\
& + Fast-dLLM & 301.33 $\ _{(12.1\times)}$ & $53.20\ _{(\pm 0.56)}$ \\
\rowcolor{tabblue!20} 
\cellcolor{white}
\multirow{-3}{*}{BBH}
& + Ours & 693.96 $\ _{(27.9\times)}$ & $53.13\ _{(\pm 0.56)}$ \\

\midrule
& Baseline & 20.68 $\ _{(1.0\times)}$ & $37.08\ _{(\pm 0.43)}$ \\
& + Fast-dLLM & 86.40 $\ _{(4.2\times)}$ & $37.34\ _{(\pm 0.43)}$ \\
\rowcolor{tabblue!20} 
\cellcolor{white}
\multirow{-3}{*}{MMLU-pro}
& + Ours & 224.97 $\ _{(10.9\times)}$ & $36.85\ _{(\pm 0.42)}$ \\

\midrule
\multicolumn{4}{c}{Code} \\
\midrule
& Baseline & 5.75 $\ _{(1.0\times)}$ & $39.20$ \\
& + Fast-dLLM & 50.11 $\ _{(8.7\times)}$ & $37.80$ \\
\rowcolor{tabblue!20} 
\cellcolor{white}
\multirow{-3}{*}{MBPP}
& + Ours & 143.25 $\ _{(24.9\times)}$ & $38.20$ \\

\midrule
& Baseline & 37.48 $\ _{(1.0\times)}$ & $42.07$ \\
& + Fast-dLLM & 117.67 $\ _{(3.1\times)}$ & $39.81$ \\
\rowcolor{tabblue!20} 
\cellcolor{white}
\multirow{-3}{*}{HumanEval}
& + Ours & 215.83 $\ _{(5.8\times)}$ & $40.24$ \\

\bottomrule
\end{tabular}
}

\end{sc}
\end{small}
\end{center}
\vskip -0.1in
\end{table}

\subsection{Main Results}
\cref{tab-exp_main_llada} shows that SPA-Cache consistently outperforms existing caching methods across all benchmarks.
By leveraging both the singular proxy identifier and adaptive budget allocation SPA-Cache achieves substantial throughput improvements and maintains comparable accuracy to vanilla decoding.
Specifically, on the MMLU-pro benchmark, SPA-Cache yields a {$6.0\times$} and {$6.2\times$} speedup on LLaDA-8B and Dream-7B, respectively.
The most significant gain is observed on MBPP with LLaDA-8B, where SPA-Cache attains an {$8.0\times$} throughput acceleration.

\subsection{Integration with Parallel Decoding.}
An advantage of DLMs is their inherent potentiality for parallel decoding \citep{llada}. SPA-Cache is orthogonal to these techniques.
To demonstrate this synergy, we integrate SPA-Cache with the parallel decoding framework from \citet{Fast-dLLM}. 

As shown in \cref{tab-exp_parallel_llada}, combining SPA-Cache with parallelization yields further efficiency gains.
While parallel decoding introduces a marginal distribution approximation error (resulting in a slight accuracy trade-off), the throughput improvements are remarkable.
On the BBH benchmark, the combined approach achieves an unprecedented $28\times$ speedup over vanilla decoding.
Crucially, SPA-Cache continues to outperform Fast-dLLM's dual cache within the parallelized setting, showcasing its robustness.

\subsection{Ablations}
\label{04_03_abl}
\begin{table}[t]
\caption{
    Ablation on identifier and adaptive budget allocation.
}
\label{tab-exp_abl}
\vskip -0.15in
\begin{center}
\begin{small}
\begin{sc}

\resizebox{\columnwidth}{!}{%
\begin{tabular}{l c c c c}
\toprule
Identifier & Peak $\rho_p$ & Avg $\bar{\rho}$ & TPS $\uparrow$ & Accuracy $\uparrow$ \\
\midrule
\textbf{None} & 100\% & 100\% & 29.01 & $78.62\ _{(\pm 1.13)}$ \\
Value & 25\% & 25\% & 164.88& \best{78.59\ _{(\pm 1.13)}}  \\
Singular$_{128}$ & 25\% & 25\% & 179.43& $78.23\ _{(\pm 1.12)}$  \\
\rowcolor{tabblue!20} 
Singular$_{128}$ & 25\% & 16\% & \best{189.13} & $78.24\ _{(\pm 1.14)}$  \\
Singular$_{128}$ & 16\% & 16\% & \best{190.06} & $75.65\ _{(\pm 1.17)}$  \\
\bottomrule
\end{tabular}
}

\end{sc}
\end{small}
\end{center}
\vskip -0.1in
\end{table}

In this section, we conduct extensive ablation studies to systematically evaluate the individual contributions of our proposed components to the overall performance of SPA-Cache in practaice.
Unless otherwise specified, all experiments in this section are performed using the LLaDA-8B model \citep{llada} on the GSM8K \citep{gsm8k} benchmark.

\noindent
\textbf{Efficacy of Proposed Components.}
We isolate the contributions of the singular proxy and adaptive allocation in \cref{tab-exp_abl}.
Replacing the Value proxy identifier with our singular proxy identifier (\cref{03_02_proxy}) increases TPS from 165 to 179 without degrading accuracy.
Furthermore, introducing the adaptive budget allocation (\cref{03_03_budget}) allows the model to reduce the average update ratio from $\bar{\rho}=25\%$ to $\bar{\rho}=16\%$, pushing the TPS to 189.
Notably, when we ignore the layer-wise heterogeneity and force a uniform $16\%$ update ratio across all layers, accuracy drops from $78.23\%$ to $75.56\%$.
This confirms that our adaptive strategy successfully identifies and exploits stable layers to save computation without sacrificing model performance.

\noindent
\textbf{Impact of Singular Proxy Rank.}
The dimension of the singular proxy ($r$) controls the trade-off between identification overhead and approximation quality.
We quantitatively analyze this in \cref{tab-exp_abl_rank}.
We observe that while lower dimensions further improve TPS, accuracy begins to decline when $r < 128$.
We therefore select $r=128$ as the sweet spot and use it as the default value across all experiments.

\begin{table}[t]
\caption{
    \textbf{Impact of Proxy Rank $r$ on LLaDA-8B-Instruct \citep{llada-1.5}, GSM8K \citep{gsm8k}.} While the full-dimensional Value identifier uses $d=4096$, our singular value proxy achieves comparable performance at $r=512$.
    Notably, $r=128$ strike a good balance between throughput and accuracy.
}
\label{tab-exp_abl_rank}
\vskip -0.15in
\begin{center}
\begin{small}
\begin{sc}

\begin{tabular}{l c c}
\toprule
Identifier & TPS $\uparrow$ & Accuracy $\uparrow$ \\
\midrule
\textbf{None} & 29.01 & $78.62\ _{(\pm 1.13)}$ \\
Value & 164.88 & $78.59\ _{(\pm 1.13)}$ \\
Singular$_{512}$ & 172.56 & $78.57\ _{(\pm 1.13)}$ \\
Singular$_{256}$ & 176.38 & $78.47\ _{(\pm 1.12)}$ \\
\rowcolor{tabblue!20}
Singular$_{128}$ & 179.43 & $78.23\ _{(\pm 1.12)}$ \\
Singular$_{64 }$ & 181.83 & $77.79\ _{(\pm 1.14)}$ \\
Singular$_{32 }$ & 182.66 & $76.71\ _{(\pm 1.15)}$ \\
\bottomrule
\end{tabular}

\end{sc}
\end{small}
\end{center}
\vskip -0.1in
\end{table}

\begin{figure}[b]
\vskip -0.1in
\begin{center}
\centerline{\includegraphics[width=\linewidth]{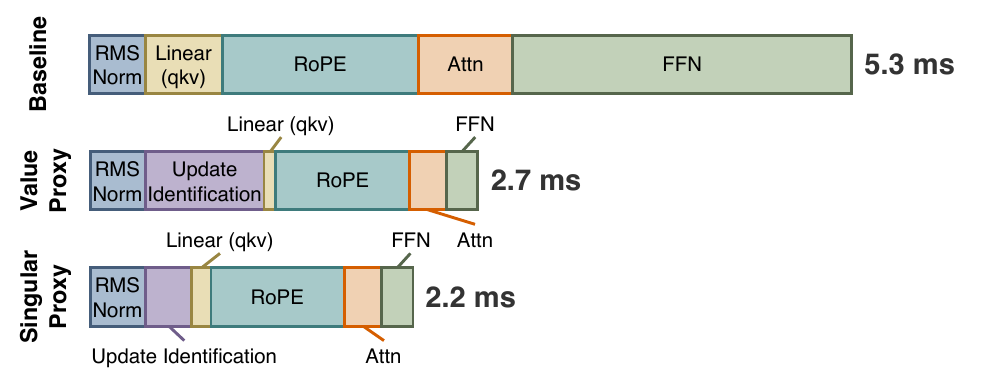}}
\caption{
    \textbf{Component-wise Latency Decomposition.}
    Value proxy reduces Attention and FFN overhead, but it introduces additional computational cost during update identification.
    Our proposed singular proxy further reduces this identification cost.
}
\label{fig-exp_operation}
\end{center}
\vskip -0.3in
\end{figure}

\noindent
\textbf{Analysis of Computational Overhead.}
To investigate the source of our efficiency gains, we profile the per-layer breakdown for LLaDA-8B in \cref{fig-exp_operation} (setting prompt length $1024$, decoding length $256$, and $\rho=5\%$). 
In the vanilla baseline, execution time is dominated by the linear projections and non-linearities in the Attention and FFN components. 
While the Value proxy caching methods successfully bypass these primary costs, they introduce a non-trivial identification overhead that emerges as a new computational bottleneck. 
In contrast, SPA-Cache's singular proxy enables token selection in a highly compressed subspace, drastically reducing identification latency.
This optimization allows the total execution time to approach the theoretical floor, maximizing the practical utility of the sparse update strategy.

\section{Discussion}

\textbf{Alternative Design Explorations.}
We investigated another alternatives that ultimately proved sub-optimal.
In current DLM implementations with moderate sequence lengths (\eg $\le 2048$), the Attention is relatively lightweight compared to the FFN component. We hypothesized that using the Attention output as an identifier would provide a higher-fidelity signal for update selection. However, as shown in \cref{tab-method_indicator}, identification accuracy dropped significantly. Post-mortem analysis suggests this is due to the Anisotropy Masking Effect (see \cref{app_failure}), where the Attention output exhibits a lower signal-to-noise ratio for hidden state dynamics.

\noindent
\textbf{Limitations.}
While SPA-Cache demonstrates superior performance across various benchmarks, it possesses certain limitations.
Like other DLM caching methods \citep{dLLM-Cache,Fast-dLLM,dkv-cache}, its efficacy relies on the temporal smoothness of hidden state transitions.
In high-temperature sampling regimes (\eg $\tau \in [1.0, 1.5]$), the increased stochasticity can lead to reduced accuracy. 
Additionally, deploying SPA-Cache in distributed or tensor-parallel settings may require advanced engineering specifications, increasing the implementation complexity.

\noindent
\textbf{Further study.}
SPA-Cache primarily optimizes the decoding phase, leaving the prefilling latency (Time-to-First-Token, TTFT) largely unaffected. 
Furthermore, as current open-source DLMs do not yet support extended context windows, evaluating their performance on ultra-long sequences (\eg, $>32$k tokens) will be a critical area of investigation as foundation models evolve to incorporate such capabilities.

\section{Conclusion}
This paper introduces SPA-Cache, a training-free and principled caching framework designed to address the inference bottlenecks of diffusion language models. 
Moving beyond purely empirical observations, we provide a theoretical foundation for hidden state update identification, justifying the use of Value states as an effective proxy. 
By projecting the identification process into a low-dimensional manifold through our singular proxy approach, we significantly reduce overhead of update identification. 
Furthermore, we capitalize on the discovered layer-wise heterogeneity, utilizing an adaptive budget allocation strategy to avoid redundant computations in stable layers. 
Extensive benchmarks confirm that SPA-Cache delivers substantial throughput gains without compromising generation fidelity, and its seamless integration with parallel decoding underscores its versatility. 
We believe SPA-Cache establishes a new efficiency baseline and will catalyze further research into the practical deployment of discrete diffusion models.

\section*{Acknowledgements}

This research / project is supported by the National Research Foundation, Singapore, and Cyber Security Agency of Singapore under its National Cybersecurity R\&D Programme and CyberSG R\&D Cyber Research Programme Office. Any opinions, findings and conclusions or recommendations expressed in these materials are those of the author(s) and do not reflect the views of National Research Foundation, Singapore, Cyber Security Agency of Singapore as well as CyberSG R\&D Programme Office, Singapore.

\section*{Impact Statement}

Generative language models carry the risk of producing biased, privacy-violating, harmful content, or offending intellectual property.
Our method, designed to improve the generation efficiency of diffusion language models, may also inherit these potential negative impacts.
Users and service providers should take responsibility for the generated content and strive to ensure positive social impacts.

\bibliography{ref}
\bibliographystyle{icml2026}

\newpage
\appendix
\onecolumn
\section{Proof of Theorems in \cref{03_method}}
\label{app_proof}

\subsection{Proof of \cref{theorem_value}}
\label{app_proof_0}
\begin{proof}
We adopt two stability assumptions:
\begin{enumerate}[labelindent=1em, leftmargin=*]
    \item \textbf{Stable Attention:} The total variation of attention weights is bounded: $\sum_j |\alpha_{ij}^{t+1} - \alpha_{ij}^t| \le \delta_A$.
    \item \textbf{Maximal Drift at $i$:} The drift of the $i$-th Value vector upper-bounds the weighted drift of the context: 
    $\sum_j \alpha_{ij}^{t+1} \|\vv_j^{t+1} - \vv_j^t\| \le \lambda \|\vv_i^{t+1} - \vv_i^t\|$, where $\lambda \ge 1$ is a relax constant.
\end{enumerate}

We define the drift of the output vector as $\Delta \vh_i = \vh_i^{t+1} - \vh_i^t$. 
Using the triangle inequality, we separate the drift into Value shifts and Attention shifts:
\begin{equation}
    \|\Delta \vh_i\| 
    = \left\| \sum_j \alpha_{ij}^{t+1} \vv_j^{t+1} - \sum_j \alpha_{ij}^t \vv_j^t \right\|
    \le \underbrace{\sum_j \alpha_{ij}^{t+1} \|\vv_j^{t+1} - \vv_j^t\|}_{\text{Weighted Value Drift } (\mathcal{D}_v)} + \underbrace{\sum_j |\alpha_{ij}^{t+1} - \alpha_{ij}^t| \|\vv_j^t\|}_{\text{Attention Shift } (\mathcal{E}_{\alpha})}.
\end{equation}

For the value drift term $\mathcal{D}_v$:
\begin{equation}
    \mathcal{D}_v \le \lambda \|\vv_i^{t+1} - \vv_i^t\|.
\end{equation}

Let the norms be bounded such that $\|\vv\| \in [V_{\min}, V_{\max}]$ and $\|\vh\| \in [H_{\min}, H_{\max}]$ \citep{xiong2020layer,trauger2024sequence}.
For the attention shift term $\mathcal{E}_{\alpha}$, using the bound $\delta_A$ and max norm $V_{\max}$:
\begin{equation}
    \mathcal{E}_{\alpha} \le \delta_A V_{\max}.
\end{equation}

Let $\Delta_V = V_{\max} - V_{\min}$ be the maximum norm fluctuation.
\begin{align}
    \|\vv_i^{t+1} - \vv_i^t\| 
    &= \sqrt{(\|\vv_i^{t+1}\| - \|\vv_i^t\|)^2 + 2\|\vv_i^{t+1}\|\|\vv_i^t\|(1 - \mathcal{S}_{\cos}(\vv_i^t, \vv_i^{t+1}))} \\
    &\le \sqrt{\Delta_V^2 + 2V_{\max}^2(1 - \mathcal{S}_{\cos}(\vv_i^t, \vv_i^{t+1}))}
\end{align}
Using the subadditivity of the square root ($\sqrt{a+b} \le \sqrt{a} + \sqrt{b}$):
\begin{equation}
    \|\vv_i^{t+1} - \vv_i^t\| 
    \le \Delta_V + V_{\max}\sqrt{2(1 - \mathcal{S}_{\cos}(\vv_i^t, \vv_i^{t+1}))}.
\end{equation}

Substituting this back into the bound for $\|\Delta \vh_i\|$:
\begin{equation}
    \|\Delta \vh_i\| \le \lambda V_{\max}\sqrt{2(1 - \mathcal{S}_{\vv_i})} + (\lambda \Delta_V + \delta_A V_{\max}).
\end{equation}
Let $X = \lambda V_{\max}\sqrt{2(1 - \mathcal{S}_{\vv_i})}$ be the signal term and $Y = \lambda \Delta_V + \delta_A V_{\max}$ be the noise term.

From the geometry of cosine similarity:
\begin{equation}
    \mathcal{S}_{\cos}(\vh_i^t, \vh_i^{t+1}) \ge 1 - \frac{\|\Delta \vh_i\|^2}{2H_{\min}^2}.
\end{equation}
Rearranging for dissimilarity ($1 - \mathcal{S}$):
\begin{equation}
    1 - \mathcal{S}_{\cos}(\vh_i^t, \vh_i^{t+1}) \le \frac{(X + Y)^2}{2H_{\min}^2} \le \frac{X^2 + 2XY + Y^2}{2H_{\min}^2}.
\end{equation}
Substituting $X^2 = 2\lambda^2 V_{\max}^2 (1 - \mathcal{S}_{\vv_i})$:
\begin{equation}
    1 - \mathcal{S}_{\cos}(\vh_i^t, \vh_i^{t+1}) \le \underbrace{\frac{\lambda^2 V_{\max}^2}{H_{\min}^2}}_{C} (1 - \mathcal{S}_{\cos}(\vv_i^t, \vv_i^{t+1})) + \underbrace{\frac{2XY + Y^2}{2H_{\min}^2}}_{\epsilon}.
\end{equation}

This concludes the proof. The output dissimilarity is linearly bounded by the input dissimilarity of token $i$, scaled by the squared condition number $C = (\lambda V_{\max} / H_{\min})^2$.
\end{proof}

\subsection{Proof of \cref{theorem_mlp}}
\begin{proof}
    A standard Transformer MLP consists of RMSNorm operations, several linear projections, activation functions $\sigma(\cdot)$.
    Each of these components is Lipschitz continuous \citep{gouk2018regularisation}:
    \begin{itemize}[itemsep=0pt]
        \item Linear transformations ($\mW$) are Lipschitz with constant equal to their spectral norm $\|\mW\|$.
        \item Standard activations (ReLU, GELU, SiLU) are Lipschitz continuous with constant $L_\mathrm{act} \approx 1$.
        \item RMSNorm is Lipschitz continuous over a domain of inputs bounded away from zero.
    \end{itemize}
    The composition of Lipschitz functions is Lipschitz. Thus, there exists a constant $L$ such that for any inputs $\vh_1, \vh_2$:
    \begin{equation}
        \label{app_eq_lipschitz}
        \| f_{\mathrm{MLP}}(\vh_1) - f_{\mathrm{MLP}}(\vh_2) \| \le L \| \vh_1 - \vh_2 \|.
    \end{equation}

    Consider the squared Euclidean distance between the inputs $\vh_1$ and $\vh_2$:
    \begin{equation}
        \| \vh_1 - \vh_2 \|^2 = \| \vh_1 \|^2 + \| \vh_2 \|^2 - 2 \| \vh_1 \| \| \vh_2 \| \mathcal{S}_{\cos}(\vh_1, \vh_2).
    \end{equation}
    Assuming the hidden states within the Transformer operate within a bounded norm range \citep{xiong2020layer,trauger2024sequence}, let $H_{\min} \le \|\vh_\cdot\| \le H_{\max}$. The Euclidean distance can be exactly related to cosine similarity via:
    \begin{equation}
        \| \vh_1 - \vh_2 \|^2 = (\|\vh_1\| - \|\vh_2\|)^2 + 2\|\vh_1\|\|\vh_2\|(1 - \mathcal{S}_{\cos}(\vh_1, \vh_2)).
    \end{equation}
    Bounding the magnitude difference by $\Delta = H_{\max} - H_{\min}$ and the product by $H_{\max}^2$, we obtain a strict upper bound:
    \begin{equation}
        \| \vh_1 - \vh_2 \|^2 \le \Delta^2 + 2H_{\max}^2(1 - \mathcal{S}_{\cos}(\vh_1, \vh_2)).
    \end{equation}
    Using the subadditivity of the square root ($\sqrt{a+b} \le \sqrt{a} + \sqrt{b}$):
    \begin{equation}
        \label{app_eq_sim_strict}
        \| \vh_1 - \vh_2 \| \le \Delta + \sqrt{2}H_{\max} \cdot \sqrt{1 - \mathcal{S}_{\cos}(\vh_1, \vh_2)}.
    \end{equation}
    Substituting \cref{app_eq_sim_strict} into the Lipschitz condition (\cref{app_eq_lipschitz}), we set $C = L \cdot \sqrt{2}H_{\max}$ and $\epsilon = L \cdot \Delta$:
    \begin{equation}
        \| f_{\mathrm{MLP}}(\vh_1) - f_{\mathrm{MLP}}(\vh_2) \| \le \epsilon + C \cdot \sqrt{1 - \mathcal{S}_{\cos}(\vh_1, \vh_2)}.
    \end{equation}
\end{proof}

\subsection{Proof of \cref{theorem_svd}}
\label{app_proof_svd}

\subsubsection{Propositions and Lemma for the proof}
\begin{proposition}\label{thm: bernoulli}
\textbf{(Generalized Bernoulli Theorem)} For $x\ge0$, $1 / \sqrt{1+x} \ge 1 - x / 2$.
\end{proposition}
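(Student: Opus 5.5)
The claim is the Generalized Bernoulli inequality: for $x \ge 0$, $1/\sqrt{1+x} \ge 1 - x/2$. I would reduce it to a polynomial inequality by clearing the square root. There are two cases, depending on the sign of the right-hand side.

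\textbf{Case $x \ge 2$.} Here $1 - x/2 \le 0 < 1/\sqrt{1+x}$, so the inequality holds trivially.

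\textbf{Case $0 \le x < 2$.} Both sides are positive, so squaring is order-preserving. It therefore suffices to show
\begin{equation*}
(1 - x/2)^2 (1+x) \le 1 .
\end{equation*}
Expanding gives
\begin{equation*}
(1 - x + x^2/4)(1+x) = 1 - \tfrac{3}{4}x^2 + \tfrac{1}{4}x^3 = 1 - \tfrac{x^2}{4}(3 - x).
\end{equation*}
Since $x < 2 < 3$, the correction term $\tfrac{x^2}{4}(3-x)$ is nonnegative, so the product is at most $1$. Taking reciprocals and square roots then yields the claim.

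\textbf{Alternative route.} One could instead use convexity. The function $g(x) = (1+x)^{-1/2}$ is convex, since $g''(x) = \tfrac{3}{4}(1+x)^{-5/2} > 0$. Its tangent line at $x = 0$ is $1 - x/2$, and a convex function lies above its tangent lines. This argument even gives the inequality for all $x > -1$.

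The only point needing care is the sign issue before squaring. The case split above handles it, so I expect no real obstacle.
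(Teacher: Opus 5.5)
Your proof is correct, and it takes a different route from the paper. The paper expands $1/\sqrt{1+x}$ in its binomial series at $x=0$, writes it as $1 - \tfrac12 x + \bigl(\tfrac38 x^2 - \tfrac{5}{16}x^3 + \cdots\bigr)$, and asserts that the bracketed tail is nonnegative. That is quick, but read literally it only works for $0 \le x \le 1$. The series has radius of convergence $1$, and even on $[0,1]$ nonnegativity of the tail needs an alternating-series estimate (the coefficients alternate in sign and shrink in magnitude), which the paper never states. For $x>1$ the series diverges and the argument establishes nothing. This gap matters, because Lemma~\ref{thm: error-bound} applies the proposition with $x=(r_1/a_1)^2$, which need not be at most $1$. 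Your squaring argument first disposes of $x \ge 2$ by a sign check, then reduces the rest to $1 - \tfrac{x^2}{4}(3-x) \le 1$. It is fully elementary and covers all $x \ge 0$. Your convexity route is the rigorous version of the paper's own idea. Taylor's theorem with Lagrange remainder gives $(1+x)^{-1/2} = 1 - \tfrac12 x + \tfrac38(1+\xi)^{-5/2}x^2$ for some $\xi$ between $0$ and $x$. So the correction term $\tfrac38 x^2$ that the paper points to becomes a remainder that is visibly nonnegative, and the inequality extends to all $x>-1$.
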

\begin{proof}
The proof is straightforward by Taylor expansion of $1 / \sqrt{1+x}$ at $x=0$:
\begin{equation}
    \frac{1}{\sqrt{1+x}} = 1 - \frac{1}{2}x + \underbrace{\frac{3}{8}x^2 - \frac{5}{16}x^3 + \ldots}_{\ge 0} \ge 1 - \frac{1}{2}x.
\end{equation}
\end{proof}

\begin{proposition}\label{thm: project-norm-inequality}
For any $\vh \in \mathbb{R}^d$ and the $r$-rank approximation $\mU_r \Lambda_r \mV_r^\top$ of $\mW$, the following inequality holds:
\begin{equation}
    \frac{\|\mW\vh - \mU_r \Lambda_r \mV_r^\top\vh\|}{\|\Lambda_r \mV_r^\top \vh\|}
    \le \frac{\lambda_{r+1}}{\lambda_r} \frac{\|\vh\|}{\|\mV_r^\top \vh\|}.
\end{equation}
\end{proposition}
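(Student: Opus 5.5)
Write the SVD in block form, $\mW = \mU_r \Lambda_r \mV_r^\top + \mU_{\perp}\Lambda_{\perp}\mV_{\perp}^\top$, where $\Lambda_{\perp}=\mathrm{diag}(\lambda_{r+1},\ldots,\lambda_d)$. The numerator is then the tail term,
\begin{equation*}
\mW\vh - \mU_r \Lambda_r \mV_r^\top\vh = \mU_{\perp}\Lambda_{\perp}\mV_{\perp}^\top \vh .
\end{equation*}
The plan is to bound this numerator from above and the denominator from below, using only orthogonality and the ordering of the singular values.

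\textbf{Numerator.} Since $\mU_{\perp}$ has orthonormal columns, it preserves norms:
\begin{equation*}
\|\mU_{\perp}\Lambda_{\perp}\mV_{\perp}^\top\vh\| = \|\Lambda_{\perp}\mV_{\perp}^\top\vh\| .
\end{equation*}
Every diagonal entry of $\Lambda_\perp$ is at most $\lambda_{r+1}$, so this is at most $\lambda_{r+1}\|\mV_{\perp}^\top\vh\|$. Because $[\mV_r,\mV_\perp]$ is orthogonal, $\|\mV_{\perp}^\top\vh\|\le\|\vh\|$. Together these give a numerator of at most $\lambda_{r+1}\|\vh\|$.

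\textbf{Denominator.} Every diagonal entry of $\Lambda_r$ is at least $\lambda_r$, and the singular values are nonnegative. Writing $\|\Lambda_r\vc\|^2=\sum_{k\le r}\lambda_k^2 c_k^2$ with $\vc=\mV_r^\top\vh$, we get
\begin{equation*}
\|\Lambda_r \mV_r^\top\vh\| \ge \lambda_r \|\mV_r^\top \vh\| .
\end{equation*}

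Dividing the two bounds gives exactly the stated inequality, $\frac{\lambda_{r+1}}{\lambda_r}\frac{\|\vh\|}{\|\mV_r^\top\vh\|}$. The claim is only meaningful when $\lambda_r>0$ and $\mV_r^\top\vh\neq 0$, so we assume both.

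There is no real obstacle here. The one point needing care is justifying norm preservation by $\mU_\perp$, which requires the full SVD rather than the "$\approx$" truncation used in the main text. I would therefore state at the outset that $\mW=\mU\Lambda\mV^\top$ exactly with $\mU,\mV$ orthogonal, and derive the split into the $r$ leading and $d-r$ trailing blocks from that.

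It is also worth noting how this feeds into \cref{theorem_svd}. Under the assumption $\vh\in\mathrm{span}(\mV_r)$ there, $\|\mV_r^\top\vh\|=\|\vh\|$, so the ratio collapses to $\lambda_{r+1}/\lambda_r$. That relative error would then be combined with \cref{thm: bernoulli} to control the change in cosine similarity.
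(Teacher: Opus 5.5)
Your proof is correct and follows the paper's argument. The paper bounds the numerator by $\|\mW - \mU_r\Lambda_r\mV_r^\top\|\,\|\vh\| = \lambda_{r+1}\|\vh\|$ and the denominator by $\lambda_r\|\mV_r^\top\vh\|$, exactly as you do; your block decomposition $\mU_\perp\Lambda_\perp\mV_\perp^\top$ only makes explicit why the residual has spectral norm $\lambda_{r+1}$, and your non-degeneracy conditions ($\lambda_r>0$, $\mV_r^\top\vh\neq 0$) are implicit in the paper and worth stating.
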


\begin{proof}
For the numerator, we have
\begin{equation}
    \|\mW\vh - \mU_r \Lambda_r \mV_r^\top\vh\| 
    \le \|\mW - \mU_r \Lambda_r \mV_r^\top\| \|\vh\|
    = \lambda_{r+1} \|\vh\|.
\end{equation}
For the denominator, we have
\begin{equation}
    \|\Lambda_r \mV_r^\top \vh\|
    \ge \lambda_r \|\mV_r^\top \vh\|.
\end{equation}
Combining these two inequalities, 
\begin{equation}
    \frac{\|\mW\vh - \mU_r \Lambda_r \mV_r^\top\vh\|}{\|\Lambda_r \mV_r^\top \vh\|}
    \le \frac{\lambda_{r+1}}{\lambda_r} \frac{\|\vh\|}{\|\mV_r^\top \vh\|}.
\end{equation}
\end{proof}

\begin{lemma}\label{thm: error-bound}
Let $\mW_r = \Lambda_r \mV_r^\top$, $\vo = \mW \vh$, and $\vo' = \mW_r \vh$.
Then the divergence of the cosine similarity between the full and low-dimensional projections can be bounded as follows:
\begin{equation}
    |\mathcal{S}_{\cos}(\vo_1, \vo_2) - \mathcal{S}_{\cos}(\vo'_1, \vo_2')| 
    \leq \frac{1}{2}\left(\frac{\lambda_{r+1}}{\lambda_r}\right)^2 
    \left(
        \frac{\|\vh_1\|}{\|\mV_r^\top\vh_1\|}
        +\frac{\|\vh_2\|}{\|\mV_r^\top\vh_2\|
    }\right)^2.
\end{equation}
\end{lemma}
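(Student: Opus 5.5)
The plan is to compare the two cosine similarities through an exact orthogonal decomposition of the full Value vector. The decomposition separates the singular-proxy component from a residual. Proposition~\ref{thm: project-norm-inequality} controls the residual, and Proposition~\ref{thm: bernoulli} controls the renormalisation. Throughout, set $\eta = \lambda_{r+1}/\lambda_r$ and $a_i = \|\vh_i\|/\|\mV_r^\top \vh_i\|$, assuming $\mV_r^\top\vh_i \neq 0$ so that everything is well defined.

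\textbf{Step 1 (orthogonal split).} Write $\vo_i = \mW\vh_i = \mU_r \vo'_i + \mathbf{e}_i$ with $\mathbf{e}_i = (\mW - \mU_r\Lambda_r\mV_r^\top)\vh_i$. From the SVD, $\mathbf{e}_i$ lies in the span of the trailing left singular vectors $\mathbf{u}_{r+1},\dots,\mathbf{u}_d$, so it is orthogonal to the range of $\mU_r$. Since $\mU_r$ has orthonormal columns, this gives
\begin{equation*}
\langle \vo_1,\vo_2\rangle = \langle \vo'_1,\vo'_2\rangle + \langle \mathbf{e}_1,\mathbf{e}_2\rangle, \qquad \|\vo_i\|^2 = \|\vo'_i\|^2(1+x_i),
\end{equation*}
where $x_i = \|\mathbf{e}_i\|^2/\|\vo'_i\|^2$. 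By Proposition~\ref{thm: project-norm-inequality}, $\sqrt{x_i} \le \eta a_i$.

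\textbf{Step 2 (rewrite the full similarity).} Dividing through, I get
\begin{equation*}
\mathcal{S}_{\cos}(\vo_1,\vo_2) = k\,\bigl(\mathcal{S}_{\cos}(\vo'_1,\vo'_2) + \rho\bigr),
\end{equation*}
where
\begin{equation*}
k = \frac{1}{\sqrt{(1+x_1)(1+x_2)}}, \qquad \rho = \frac{\langle \mathbf{e}_1,\mathbf{e}_2\rangle}{\|\vo'_1\|\|\vo'_2\|}.
\end{equation*}
Cauchy--Schwarz gives $|\rho| \le \sqrt{x_1x_2}$. The difference to bound is therefore
\begin{equation*}
\mathcal{S}_{\cos}(\vo_1,\vo_2) - \mathcal{S}_{\cos}(\vo'_1,\vo'_2) = (k-1)\,\mathcal{S}_{\cos}(\vo'_1,\vo'_2) + k\rho .
\end{equation*}

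\textbf{Step 3 (bound the two terms).} First, $0 < k \le 1$. Proposition~\ref{thm: bernoulli}, applied to each factor, gives
\begin{equation*}
k \ge \bigl(1 - \tfrac{x_1}{2}\bigr)\bigl(1 - \tfrac{x_2}{2}\bigr) \ge 1 - \tfrac{x_1+x_2}{2}.
\end{equation*}
The second inequality is immediate if both brackets are nonnegative. Otherwise $(x_1+x_2)/2 > 1$, and then the bound $1-k \le 1 < (x_1+x_2)/2$ is trivial. Hence $|k-1| \le (x_1+x_2)/2$, and since $|\mathcal{S}_{\cos}| \le 1$ the first term is at most $(x_1+x_2)/2$. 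The second term satisfies $|k\rho| \le \sqrt{x_1x_2}$.

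Summing the two bounds gives
\begin{equation*}
\tfrac{1}{2}(x_1+x_2) + \sqrt{x_1x_2} = \tfrac{1}{2}\bigl(\sqrt{x_1}+\sqrt{x_2}\bigr)^2 \le \tfrac{1}{2}\eta^2 (a_1+a_2)^2,
\end{equation*}
which is exactly the claimed bound.

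The main obstacle I expect is Step 1. Its key point is that the residual $\mathbf{e}_i$ is \emph{exactly} orthogonal to $\mU_r\vo'_i$, which is what makes the norms and inner products split additively with no cross terms. One must also keep track that the proxy similarity is computed on $\vo'_i \in \mathbb{R}^r$, whereas the split lives in $\mathbb{R}^d$ through the isometry $\mU_r$; this embedding preserves inner products, so the two similarities can be compared directly. Once that is in place, the rest is the elementary rearrangement above, and the tightness of the constant hinges only on recognising the perfect square $(\sqrt{x_1}+\sqrt{x_2})^2$.
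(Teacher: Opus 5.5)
Your proof is correct and follows the paper's argument essentially step for step. Both proofs split off the rank-$r$ part $\mU_r\Lambda_r\mV_r^\top$ with a residual orthogonal to the range of $\mU_r$, write the similarity gap as (normalisation factor $-\,1$) times the proxy cosine plus the residual inner-product term, and bound these with Proposition~\ref{thm: bernoulli} and Cauchy--Schwarz respectively. Both then close with the perfect square $\tfrac12(\sqrt{x_1}+\sqrt{x_2})^2$ and Proposition~\ref{thm: project-norm-inequality}.

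Your case split for $x_i>2$ is a small improvement: the paper multiplies the two Bernoulli lower bounds without checking that they are nonnegative. Do state the first inequality $k \ge (1-\tfrac{x_1}{2})(1-\tfrac{x_2}{2})$ only in the nonnegative case, since it can fail when both brackets are negative.
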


\begin{proof}
We start with residuals of the low-rank approximation $\mR \coloneq \mW - \mU_r \Lambda_r \mV_r^\top$. By the definition of spectral norm, we have $\|\mR\| = \lambda_{r+1}$. And $\mR$ and $\mU_r$ are orthogonal (mathematically, $\mR^\top \mU_r = 0$) because the residual $\mR$ lives in the span of the discarded singular vectors $\mathrm{span}(\mU) \setminus \mathrm{span}(\mU_r)$.

Denote $\mA \coloneq \mW - \mR = \mU_r \Lambda_r \mV_r^\top = \mU_r \mW_r$.
As the orthogonal matrix $\mU_r$ preserves the inner product and norm, $\mathcal{S}_{\cos}(\vo'_1, \vo_2') = \mathcal{S}_{\cos}(\mW_r \vh_1, \mW_r \vh_2) = \mathcal{S}_{\cos}(\mA \vh_1, \mA \vh_2)$, we use $\mA$ rather than $\mW_r$ in the following proof for clarity. 

Since $\mR$ and $\mA$ are orthogonal: $\mR^\top \mA = \mR^\top \mU_r \Lambda_r \mV_r^\top = 0 \cdot \Lambda_r \mV_r^\top = 0$, the norm of $\mW \vh_1$ can be decomposed as follows:
\begin{equation}
    \| \vo_1 \|^2
    = \| \mW \vh_1 \|^2
    = \| \mA \vh_1 + \mR \vh_1 \|^2 
    = \| \mA \vh_1 \|^2 + \| \mR \vh_1 \|^2
    + \underbrace{2\langle \mA \vh_1, \mR \vh_1 \rangle}_{=0\ \text{as}\ \mR^\top \mA = 0}.
\end{equation}

Similarly, the inner products between the adjacent features can be decomposed as follows:
\begin{align*}
    \langle \vo_1, \vo_2 \rangle 
    & = \langle \mW\vh_1, \mW\vh_2 \rangle 
    = \langle (\mA + \mR) \vh_1, (\mA + \mR)\vh_2 \rangle \\
    & = \langle \mA \vh_1, \mA \vh_2 \rangle + \langle \mR\vh_1, \mR\vh_2 \rangle
    + \underbrace{\langle \mA \vh_1, \mR\vh_2 \rangle + \langle \mR \vh_1, \mA\vh_2 \rangle}_{=0\ \text{as}\ \mR^\top \mA = 0}. \numberthis \label{eq: inner-prod}
\end{align*}

Denote $a_1 \coloneq \|\mA \vh_1\|$, $h_1 \coloneq \|\mW \vh_1\|$, and $r_1 \coloneq \|\mR \vh_1\|$ for brevity, 
\begin{flalign*}
    \mathcal{S}_{\cos}(\vo_1, \vo_2) - \mathcal{S}_{\cos}(\vo'_1, \vo_2')
    & = \frac{\langle \mW \vh_1, \mW \vh_2 \rangle}{\|\mW\vh_1\| \|\mW\vh_2\|} 
      - \frac{\langle \mA \vh_1, \mA \vh_2 \rangle}{\|\mA\vh_1\| \|\mA\vh_2\|}  \\
    & = \frac{
        \langle \mA \vh_1, \mA \vh_2 \rangle + \langle \mR \vh_1, \mR\vh_2 \rangle
    }{\|\mA\vh_1\| \|\mA\vh_2\|}
    \cdot \frac{\|\mA\vh_1\| \|\mA\vh_2\|}{\|\mW\vh_1\| \|\mW\vh_2\|}
    - \frac{\langle \mA \vh_1, \mA \vh_2 \rangle}{\|\mA\vh_1\| \|\mA\vh_2\|}
    & \mathllap{\text{{\color{codegreen} [by \cref{eq: inner-prod}]}}} \\
    & = 
    \underbrace{
        \frac{\langle \mA \vh_1, \mA \vh_2 \rangle}{\|\mA\vh_1\| \|\mA\vh_2\|}
    }_{=\mathcal{S}_{\cos}(\vo'_1, \vo_2') \in [-1, 1]}
    \underbrace{\left(
        \frac{\|\mA\vh_1\| \|\mA\vh_2\|}{\|\mW\vh_1\| \|\mW\vh_2\|} - 1
    \right)}_{=\frac{a_1a_2}{h_1h_2} - 1}
    + \frac{\langle \mR \vh_1, \mR\vh_2 \rangle}{\|\mW \vh_1\| \|\mW\vh_2\|}. \numberthis \label{eq: interm-cos-div}
\end{flalign*}

For the absolute value of the first term, we have
\begin{equation}\label{eq: interm-cos-div-part1}
    \left|\mathcal{S}_{\cos}(\vo'_1, \vo_2')\right|
    \left|\frac{a_1a_2}{h_1h_2} - 1\right|
    \le \frac{1}{2}\left(\frac{r_1}{a_1}\right)^2 + \frac{1}{2}\left(\frac{r_2}{a_2}\right)^2.
\end{equation}
Here the inequality is derivated from Proposition~\ref{thm: bernoulli} by appling $x = \left(\frac{r_1}{a_1}\right)^2$:
\begin{equation}
    \frac{a_1}{h_1} 
    = \frac{1}{\sqrt{1 + \left(\frac{r_1}{a_1}\right)^2}}
    \ge 1 - \frac{1}{2}\left(\frac{r_1}{a_1}\right)^2
    \text{ and similarly, }
    \frac{a_2}{h_2} \ge 1 - \frac{1}{2}\left(\frac{r_2}{a_2}\right)^2,
\end{equation}
\begin{equation}
    \frac{a_1a_2}{h_1h_2}
    \ge 1 - \frac{1}{2}\left(\frac{r_1}{a_1}\right)^2 
     - \frac{1}{2}\left(\frac{r_2}{a_2}\right)^2
     + \frac{1}{4}\left(\frac{r_1r_2}{a_1a_2}\right)^2
    \ge 1 - \frac{1}{2}\left(\frac{r_1}{a_1}\right)^2 
     - \frac{1}{2}\left(\frac{r_2}{a_2}\right)^2.
\end{equation}
So,
\begin{equation}
    \left|\frac{a_1a_2}{h_1h_2} - 1\right|
    = 1 - \frac{a_1a_2}{h_1h_2}
    \le \frac{1}{2}\left(\frac{r_1}{a_1}\right)^2 + \frac{1}{2}\left(\frac{r_2}{a_2}\right)^2.
\end{equation}

For the absolute value of the second term, we can use the Cauchy-Schwarz inequality:
\begin{equation}\label{eq: interm-cos-div-part2}
    \left|\frac{\langle \mR \vh_1, \mR\vh_2 \rangle}{\|\mW \vh_1\| \|\mW\vh_2\|}\right|
    \le \frac{\|\mR \vh_1\| \|\mR \vh_2\|}{\|\mW \vh_1\| \|\mW\vh_2\|}
    = \frac{r_1 r_2}{h_1 h_2}
    \le \frac{r_1 r_2}{a_1 a_2}
\end{equation}

Combining \cref{eq: interm-cos-div,eq: interm-cos-div-part1,eq: interm-cos-div-part2}, we have
\begin{equation}\label{eq: cos-div-bound}
    |\mathcal{S}_{\cos}(\vo_1, \vo_2) - \mathcal{S}_{\cos}(\vo'_1, \vo_2')|
    \leq \frac{1}{2}\left(\frac{r_1}{a_1}\right)^2 + \frac{1}{2}\left(\frac{r_2}{a_2}\right)^2 + \frac{r_1 r_2}{a_1 a_2}
    = \frac{1}{2}\left(\frac{r_1}{a_1} + \frac{r_2}{a_2}\right)^2.
\end{equation}

By Proposition~\ref{thm: project-norm-inequality},
\begin{equation}\label{eq: ratio-bound}
    \frac{r_1}{a_1} \le \frac{\lambda_{r+1}}{\lambda_r} \frac{\|\vh_1\|}{\|\mV_r^\top \vh_1\|},
    \ \text{and}\ 
    \frac{r_2}{a_2} \le \frac{\lambda_{r+1}}{\lambda_r} \frac{\|\vh_2\|}{\|\mV_r^\top \vh_2\|}.
\end{equation}

Substituting \cref{eq: ratio-bound} into \cref{eq: cos-div-bound}, the final bound is
\begin{equation}
    |\mathcal{S}_{\cos}(\vo_1, \vo_2) - \mathcal{S}_{\cos}(\vo'_1, \vo_2')| 
    \leq \frac{1}{2}\left(\frac{\lambda_{r+1}}{\lambda_r}\right)^2 
    \left(
        \frac{\|\vh_1\|}{\|\mV_r^\top\vh_1\|}
        +\frac{\|\vh_2\|}{\|\mV_r^\top\vh_2\|
    }\right)^2.
\end{equation}

\end{proof}

\subsubsection{Proof of \cref{theorem_svd}}
\begin{proof}
    With Lemma~\ref{thm: error-bound}, it suffices to show that for any $\vh_1, \vh_2 \in \mathrm{span}(\mV_r)$, we have $\|\vh_1\| = \|\mV_r^\top \vh_1\|$ and $\|\vh_2\| = \|\mV_r^\top \vh_2\|$, which establishes the conclusion of \cref{theorem_svd}.
    This is straightforward because $\mV_r$ is an orthonormal basis of $\mathrm{span}(\mV_r)$, so for any $\vh \in \mathrm{span}(\mV_r)$, we have $\|\vh\| = \|\mV_r^\top \vh\|$.
\end{proof}
\begin{remark}
\textbf{(Connection with MaxLogits)}
It is widely observed that the spectral norm of the attention projection matrix, $\|\mW\|_2 = \lambda_1$, tends to become large during the training of large language models. While this behavior can lead to numerical instability \citep{basart2022scaling}, it also incidentally benifits the error bound of our proposed singular proxy identifier.
\end{remark}

\section{Failure Analysis}
\label{app_failure}

As discussed in \cref{03_01_identifier}, using the attention output as the update identifier has the worst accuracy among all configurations in \cref{tab-method_indicator}.
Intuitively, the attention output should be eligbile.
We then dive deeper to investigate its failure mechanism.
The core issue lies in that the anisotropy of the attention outputs have been masked, reducing the sensitivity of the similarity metric to subtle semantic drifts.

\begin{figure}[tp]
\begin{center}
\centerline{\includegraphics[width=0.8\linewidth]{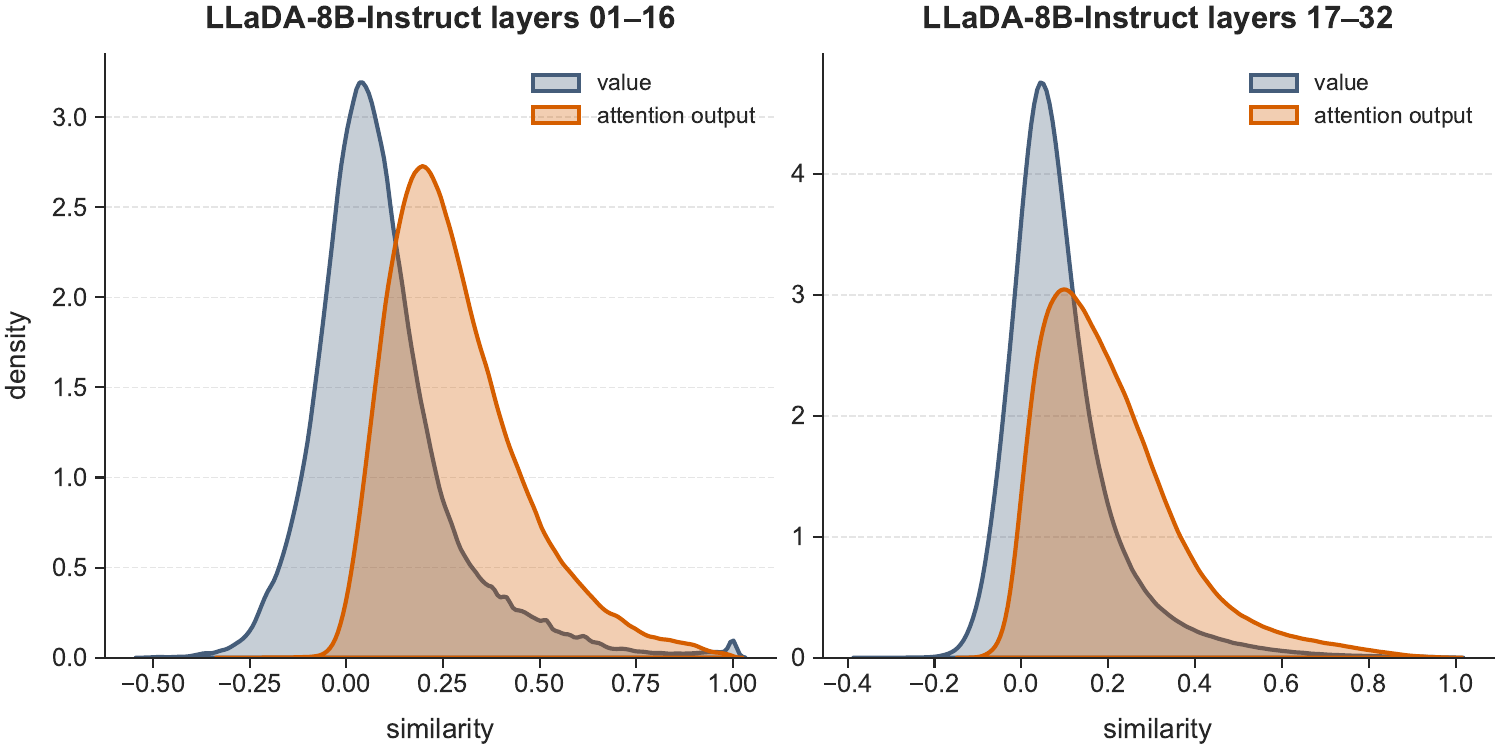}}
\caption{
    \textbf{Empirical Evidence of Anisotropy Problem in Attention Outputs.}
     The density plots compare the cosine similarity distributions of value states (blue) versus attention outputs (orange). While value states exhibit an isotropic distribution centered near zero (orthogonality), the attention outputs show a significant positive shift in mean similarity.
}
\label{fig-method_anisotropy}
\end{center}
\end{figure}

Specifically, the anisotropy masks individual token characteristics, thereby reducing the sensitivity of the similarity metric to subtle semantic drifts \citep{Ethayarajh19, GaoHTQWL19, TimkeyS21, CosPath, SeLR}. It is widely observed that hidden states in Transformers, specifically the Value states $\vv_i$ in our context, can be decomposed into a common vector $\vc$ and an independent semantic signal $\vs_i$:
\begin{equation}
    \vv_i = \vc + \vs_i, \quad \text{where} \quad \mathbb{E}[\vs_i] = \mathbf{0} \text{ and } \|\vc\| > 0.
\end{equation}
While this phenomenon is present, the common vector $\vc$ remains relatively non-dominant (${\|\vs_i\| > \|\vc\|}$), allowing the states to maintain sufficient orthogonality ($\mathbb{E}_{i \ne j}[\mathcal{S}_{\cos}(\vv_i, \vv_j)] \approx 0$). This preservation of semantic variance is visualized in \cref{fig-method_anisotropy}.
However, this representation collapses when states are aggregated into the attention output $\vh_i = \sum_j \alpha_{ij} \vv_j$. Given the stochastic constraint $\sum_j \alpha_{ij} = 1$, the transformation is expressed as:
\begin{equation}
    \vh_i = \sum_j \alpha_{ij} (\vc + \vs_j) = \vc + \sum_j \alpha_{ij} \vs_j.
\end{equation}
Here, the attention output $\vh_i$ becomes heavily dominated by the common direction $\vc$. Because the semantic signals $\vs_j$ are independent and zero-mean, their weighted sum undergoes a form of ``signal cancellation,'' leading to $\|\vc\| \gg \|\sum_j \alpha_{ij} \vs_j\|$. Consequently, the similarity distribution $\mathcal{S}_{\cos}(\vh_i, \vh_j)$ shifts toward a narrow cone where cosine similarities approach 1. This anisotropy renders the identifier insensitive to the subtle semantic drifts required for accurate update tracking.

\section{Details on Adaptive Budget Allocation}
\label{app_budget}
\begin{figure}[tp]
\begin{minipage}[b]{0.49\linewidth}
  \centering
  \includegraphics[width=\linewidth]{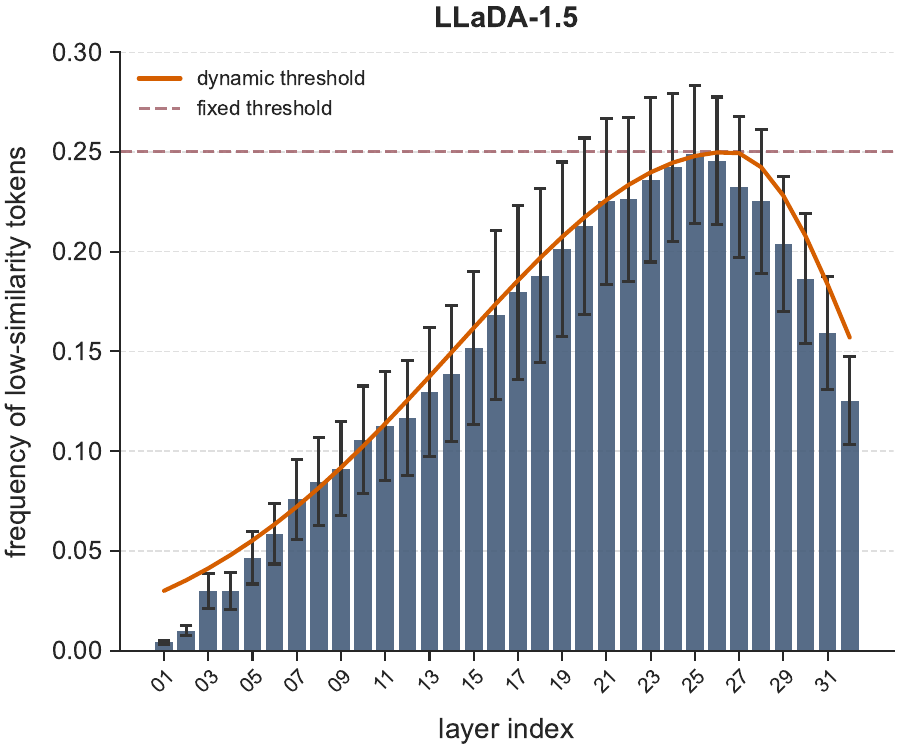}
\end{minipage}
\hfill
\begin{minipage}[b]{0.49\linewidth}
  \centering
  \includegraphics[width=\linewidth]{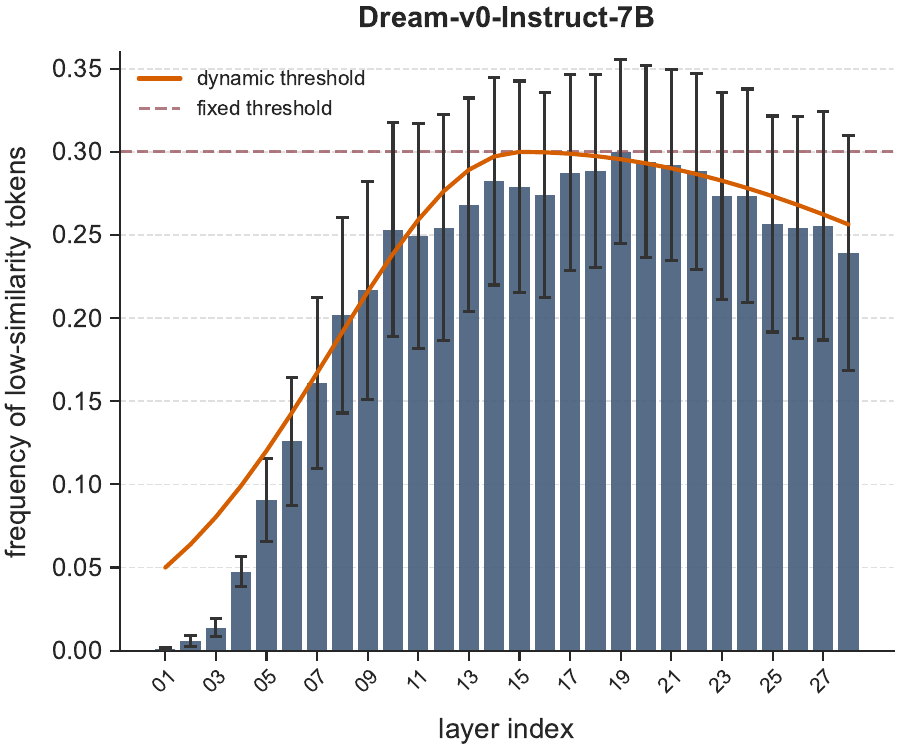}
\end{minipage}

\vskip -0.1in

\captionof{figure}{
Distribution of drift for LLaDA-1.5 \citep{llada-1.5} and Dream-v0-Instruct-7B \citep{dream}.
}
\label{fig-app_budget}
\end{figure}

\begin{table}[t]
\caption{
    Hyperparameters for fitted piecewise Gaussian function.
}
\label{tab-app_budget}
\vskip -0.15in
\begin{center}
\begin{small}
\begin{sc}

\begin{tabular}{l c c c c}
\toprule
Model & $l_p$ & $\rho_p$ & $\rho_1$ & $\rho_L$ \\
\midrule
LLaDA-8B-Instruct & 24 & $25\%$ & $3\%$ & $13\%$ \\
LLaDA-1.5 & 25 & $25\%$ & $3\%$ & $13\%$ \\
Dream-v0-Instruct-7B & 14 & $30\%$ & $5\%$ & $25\%$ \\
\bottomrule
\end{tabular}

\end{sc}
\end{small}
\end{center}
\vskip -0.1in
\end{table}

In \cref{03_03_budget}, we characterized the layer-wise heterogeneity of hidden state dynamics for LLaDA-8B \citep{llada}. In this section, we extend our analysis to a broader range of models. Specifically, the distribution profiles for LLaDA-1.5B \citep{llada-1.5} and Dream-7B \citep{dream} are illustrated in \cref{fig-app_budget}. 

While these models exhibit a general trend consistent with LLaDA-8B, the parameters for the fitted piecewise Gaussian distributions vary across architectures. We provide the specific hyperparameters employed for these models in \cref{tab-app_budget}.
It is important to emphasize that these hyperparameters remain fixed across all evaluated benchmarks, demonstrating the robust generalization capabilities of our proposed method.

\section{Experimental Details}
\label{app_exp}
\begin{table*}[t]
\caption{
    Experiment settings across benchmarks.
}
\label{tab-app_hparam}
\vskip -0.15in
\begin{center}
\begin{small}
\begin{sc}

\begin{tabular}{l c c c c c c}
\toprule
\multirow{2}{*}{Task} & \multicolumn{3}{c}{\textbf{LLaDA-8B-Instruct \& LLaDA-1.5}}  & \multicolumn{3}{c}{\textbf{Dream-v0-Instruct-7B}} \\
\cmidrule(lr){2-4} \cmidrule(lr){5-7}
& N-shot & Gen. Length & Block Length & N-shot & Gen. Length & Block Length \\
\midrule
GSM8K & 4 & 256 & 8 & 8 & 256 & 256 \\
GPQA & 5 & 128 & 64 & 5 & 256 & 256 \\
MATH500 & 4 & 256 & 32 & 4 & 256 & 256 \\
BBH & 3 & 256 & 256 & 3 & 256 & 256 \\
MMLU-pro & 5 & 256 & 256 & 5 & 256 & 256 \\
MBPP & 3 & 512 & 32 & 3 & 256 & 256\\
HumanEval & 0 & 512 & 32 & 0 & 256 & 256 \\
\bottomrule
\end{tabular}

\end{sc}
\end{small}
\end{center}
\end{table*}

In our experiments, we follow the decoding hyperparameters established by \citet{dLLM-Cache}. For completeness, these detailed configurations are provided in \cref{tab-app_hparam}.

For the Fast-dLLM baseline \citep{Fast-dLLM}, which requires the block size to be smaller than the generation length to achieve effective acceleration, we set the block size to 32 uniformly across all models and benchmarks. Other method-specific hyperparameters, such as the refresh interval in dLLM-Cache \citep{dLLM-Cache}, are kept consistent with their original configurations.

Regarding batch sizes, we use a batch size of 16 for all LLaDA-8B tasks. For the Dream-7B model, the batch size is set to 4.
This specific reduction was necessitated by Out-of-Memory (OOM) issues encountered with Fast-dLLM during long-prompt benchmarks (\eg MMLU-pro). While other evaluated methods could support larger batch sizes for higher throughput, we maintain a consistent batch size of 4 for Dream-7B to ensure a fair comparison across all baselines.

Regarding the selection of the proxy rank $r$, we set $r = 128$ for the LLaDA models \citep{llada, llada-1.5}, where the hidden dimension is $d = 4096$.
This choice is consistent with our ablation studies detailed in \cref{04_03_abl}. 
For the Dream model \citep{dream}, which utilizes Grouped-Query Attention (GQA) \citep{gqa}, the Value projection dimension is significantly smaller ($d = 512$). Consequently, we adopt a more aggressive proxy rank of $r = 32$ to maintain a comparable acceleration.

\section{Additional Experiment Results}
\label{app_exp_1.5}
\begin{table*}[t]
\caption{
    Comparision of different caching methods on LLaDA-1.5 \citep{llada-1.5}.
}
\label{tab-exp_main_llada_1.5}
\vskip -0.15in
\begin{center}
\begin{small}
\begin{sc}

\begin{tabular}{l l c c c c}
\toprule
Task & Method & TPS $\uparrow$ & TTFT (ms) $\downarrow$ & Accuracy $\uparrow$ & Mem. (GB) $\downarrow$ \\

\midrule
\multicolumn{6}{c}{Mathematics \& Science} \\
\midrule
& Baseline & $28.9\ _{(\times 1.0)}$ & 28.04 & $81.96\ _{(\pm 1.1)}$ & 62.3 \\
& + dLLM-Cache & $68.5\ _{(\times 2.4)}$ & 28.03 & $81.43\ _{(\pm 1.1)}$ & 63.8 \\
& + Fast-dLLM & $58.0\ _{(\times 2.0)}$ & 28.09 & $82.18\ _{(\pm 1.1)}$ & 64.1 \\
\rowcolor{tabblue!20} 
\cellcolor{white}
\multirow{-4}{*}{GSM8K}
& + Ours & $189.9\ _{(\times 6.6)}$ & 31.13 & $81.50\ _{(\pm 1.1)}$ & 57.5 \\
\midrule
& Baseline & $21.0\ _{(\times 1.0)}$ & 44.90 & $27.46\ _{(\pm 2.1)}$ & 102.4 \\
& + dLLM-Cache & $29.2\ _{(\times 1.4)}$ & 44.72 & $27.90\ _{(\pm 2.1)}$ & 77.0 \\
& + Fast-dLLM & $39.4\ _{(\times 1.9)}$ & 44.26 & $27.01\ _{(\pm 2.1)}$ & 91.4 \\
\rowcolor{tabblue!20} 
\cellcolor{white}
\multirow{-4}{*}{GPQA}
& + Ours & $49.2\ _{(\times 2.3)}$ & 47.68 & $27.90\ _{(\pm 2.1)}$ & 96.6 \\

\midrule
& Baseline & $35.5\ _{(\times 1.0)}$ & 23.23 & $34.40\ _{(\pm 0.6)}$ & 63.4 \\
& + dLLM-Cache & $74.2\ _{(\times 2.1)}$ & 23.28 & $33.92\ _{(\pm 0.6)}$ & 57.9 \\
& + Fast-dLLM & $83.9\ _{(\times 2.4)}$ & 23.19 & $33.54\ _{(\pm 0.6)}$ & 66.5 \\
\rowcolor{tabblue!20} 
\cellcolor{white}
\multirow{-4}{*}{MATH500}
& + Ours & $201.8\ _{(\times 5.7)}$ & 23.13 & $34.56\ _{(\pm 0.6)}$ & 51.6 \\

\midrule
\multicolumn{6}{c}{General QA} \\
\midrule
& Baseline & $22.3\ _{(\times 1.0)}$ & 16.22 & $56.61\ _{(\pm 0.6)}$ & 37.0 \\
& + dLLM-Cache & $36.0\ _{(\times 1.6)}$ & 16.35 & $57.00\ _{(\pm 0.6)}$ & 45.0 \\
& + Fast-dLLM & $103.9\ _{(\times 4.7)}$ & 16.22 & $54.66\ _{(\pm 0.6)}$ & 36.3 \\
\rowcolor{tabblue!20} 
\cellcolor{white}
\multirow{-4}{*}{BBH}
& + Ours & $131.5\ _{(\times 5.9)}$ & 16.27 & $57.43\ _{(\pm 0.6)}$ & 34.3 \\

\midrule
& Baseline & $21.4\ _{(\times 1.0)}$ & 43.99 & $38.51\ _{(\pm 0.4)}$ & 95.3 \\
& + dLLM-Cache & $51.7\ _{(\times 2.4)}$ & 42.12 & $38.30\ _{(\pm 0.4)}$ & 87.5 \\
& + Fast-dLLM & $84.3\ _{(\times 4.0)}$ & 43.41 & $38.31\ _{(\pm 0.4)}$ & 79.3 \\
\rowcolor{tabblue!20} 
\cellcolor{white}
\multirow{-4}{*}{MMLU-pro}
& + Ours & $233.2\ _{(\times 10.9)}$ & 43.90 & $38.36\ _{(\pm 0.4)}$ & 79.1 \\

\midrule
\multicolumn{6}{c}{Code} \\
\midrule
& Baseline & $6.7\ _{(\times 1.0)}$ & 28.61 & $40.80$ & 70.7 \\
& + dLLM-Cache & $9.0\ _{(\times 1.3)}$ & 28.22 & $40.80$ & 74.2 \\
& + Fast-dLLM & $13.3\ _{(\times 2.0)}$ & 28.25 & $37.20$ & 74.1 \\
\rowcolor{tabblue!20} 
\cellcolor{white}
\multirow{-4}{*}{MBPP}
& + Ours & $69.4\ _{(\times 10.4)}$ & 29.44 & $40.00$ & 69.5 \\

\midrule
& Baseline & $46.8\ _{(\times 1.0)}$ & 14.30 & $45.83$ & 51.2 \\
& + dLLM-Cache & $43.1\ _{(\times 0.9)}$ & 14.28 & $38.41$ & 55.7 \\
& + Fast-dLLM & $92.8\ _{(\times 2.0)}$ & 14.23 & $41.71$ & 50.3 \\
\rowcolor{tabblue!20} 
\cellcolor{white}
\multirow{-4}{*}{HumanEval}
& + Ours & $143.7\ _{(\times 3.1)}$ & 14.23 & $46.88$ & 48.9 \\
\bottomrule
\end{tabular}

\end{sc}
\end{small}
\end{center}
\vskip -0.1in
\end{table*}

We further evaluate our approach on LLaDA-1.5B \citep{llada-1.5}, with results detailed in \cref{tab-exp_main_llada_1.5}.
The performance trends closely mirror those observed for the larger LLaDA-8B and Dream-7B models (\cref{tab-exp_main_llada}), underscoring the architectural robustness of our method.
Notably, SPA-Cache consistently establishes a new state-of-the-art across all benchmarks, delivering substantial throughput gains while preserving the original model's generation quality with negligible degradation.

\begin{table*}[t]
\caption{
  Comparison against dKV-Cache \cite{dkv-cache}, Elastic-Cache \cite{elastic-cache}, and d2Cache \cite{d2cache} on LLaDA-8B-Instruct \cite{llada} and Dream-v0-Instruct-7B \cite{dream}.
}
\label{tab-exp_app}
\vskip -0.15in
\begin{center}
\begin{small}
\begin{sc}
\resizebox{\linewidth}{!}{%
\begin{tabular}{l l c c c c c c}
\toprule
\multirow{2}{*}{Task} & \multirow{2}{*}{Method} & \multicolumn{3}{c}{\textbf{LLaDA-8B-Instruct}} & \multicolumn{3}{c}{\textbf{Dream-v0-Instruct-7B}} \\
\cmidrule(lr){3-5} \cmidrule(lr){6-8}
& & TPS $\uparrow$ & TTFT (ms) $\downarrow$ & Accuracy $\uparrow$ & TPS $\uparrow$ & TTFT (ms) $\downarrow$ & Accuracy $\uparrow$ \\
\midrule

& Vanilla & $29.67\ _{(1.0 \times)}$ & 28.0 & $78.62\ _{(\pm 1.13)}$ & $17.86\ _{(1.0 \times)}$ & 23.6 & $75.21\ _{(\pm 1.19)}$ \\
& dKV-Cache & $56.97\ _{(1.9 \times)}$ & 28.9 & $78.01\ _{(\pm 1.14)}$ & $22.17\ _{(1.2 \times)}$ & 24.4 & $74.82\ _{(\pm 1.20)}$ \\
& Elastic-Cache & $48.00\ _{(1.6 \times)}$ & 27.4 & $77.41\ _{(\pm 1.15)}$ & $35.77\ _{(2.0 \times)}$ & 23.9 & $74.86\ _{(\pm 1.22)}$ \\
& d2Cache & $26.85\ _{(0.9 \times)}$ & 28.7 & $76.42\ _{(\pm 1.17)}$ & $28.92\ _{(1.6 \times)}$ & 24.9 & $78.70\ _{(\pm 1.13)}$ \\
\rowcolor{tabblue!20} 
\cellcolor{white}
\multirow{-5}{*}{GSM8K}
& Ours & \best{190.73\ _{(6.4 \times)}} & 28.7 & $78.24\ _{(\pm 1.14)}$ & \best{45.97\ _{(2.6 \times)}} & 24.5 & $77.56\ _{(\pm 1.15)}$ \\

\midrule

& Vanilla & $5.75\ _{(1.0 \times)}$ & 28.7 & $39.20$ & $36.51\ _{(1.0 \times)}$ & 22.2 & $57.40$ \\
& dKV-Cache & $7.83\ _{(1.4 \times)}$ & 28.1 & $39.20$ & $39.56\ _{(1.1 \times)}$ & 22.2 & $56.40$ \\
& Elastic-Cache & $16.25\ _{(2.8 \times)}$ & 29.2 & $34.80$ & $85.60\ _{(2.3 \times)}$ & 22.1 & $52.80$ \\
& d2Cache & $25.08\ _{(4.4 \times)}$ & 28.2 & $39.60$ & $41.87\ _{(1.1 \times)}$ & 21.6 & $57.60$ \\
\rowcolor{tabblue!20} 
\cellcolor{white}
\multirow{-5}{*}{MBPP}
& Ours & \best{46.12\ _{(8.0 \times)}} & 28.7 & $39.00$ & \best{114.85\ _{(3.1 \times)}} & 22.6 & $57.60$ \\

\bottomrule
\end{tabular}
}

\end{sc}
\end{small}
\end{center}
\vskip -0.1in
\end{table*}

We also compare with dKV-Cache \cite{dkv-cache}, d2Cache \cite{d2cache}, and ElasticCache \cite{elastic-cache} in \cref{tab-exp_app}. Our method outperforms these baselines in throughput (TPS) while maintaining accuracy. The performance gap stems from architectural compatibility: existing methods often rely on explicit attention weight computation, which is incompatible with optimized kernels like FlashAttention. In contrast, our method detects shifts using a singular proxy, which maintains compatibility with FlashAttention and achieves higher acceleration at the same or higher accuracy.

\section{Additional Observations}
\begin{figure}[tp]
\centering 
    \includegraphics[width=0.48\linewidth]{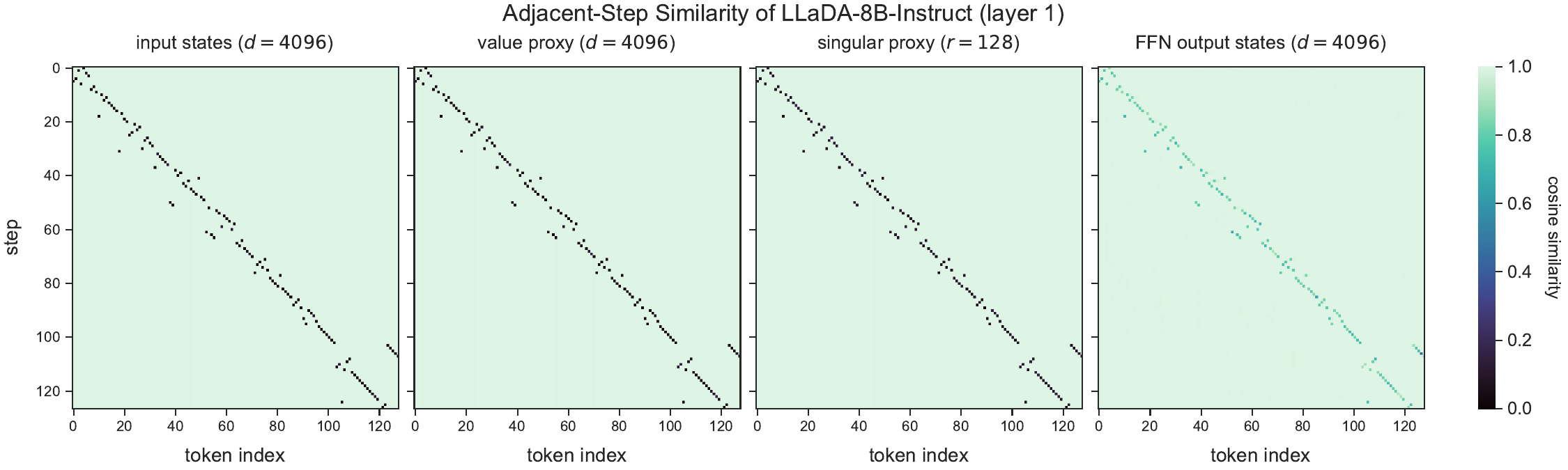}\hfill
    \includegraphics[width=0.48\linewidth]{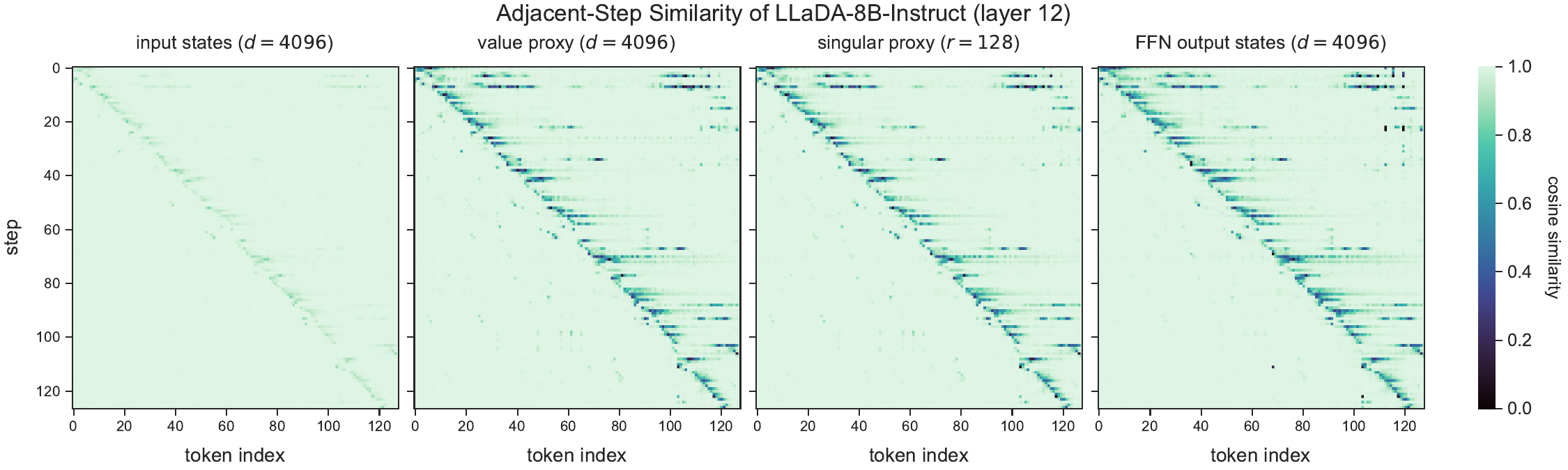}
    
    \vskip 0.1in 
    
    \includegraphics[width=0.48\linewidth]{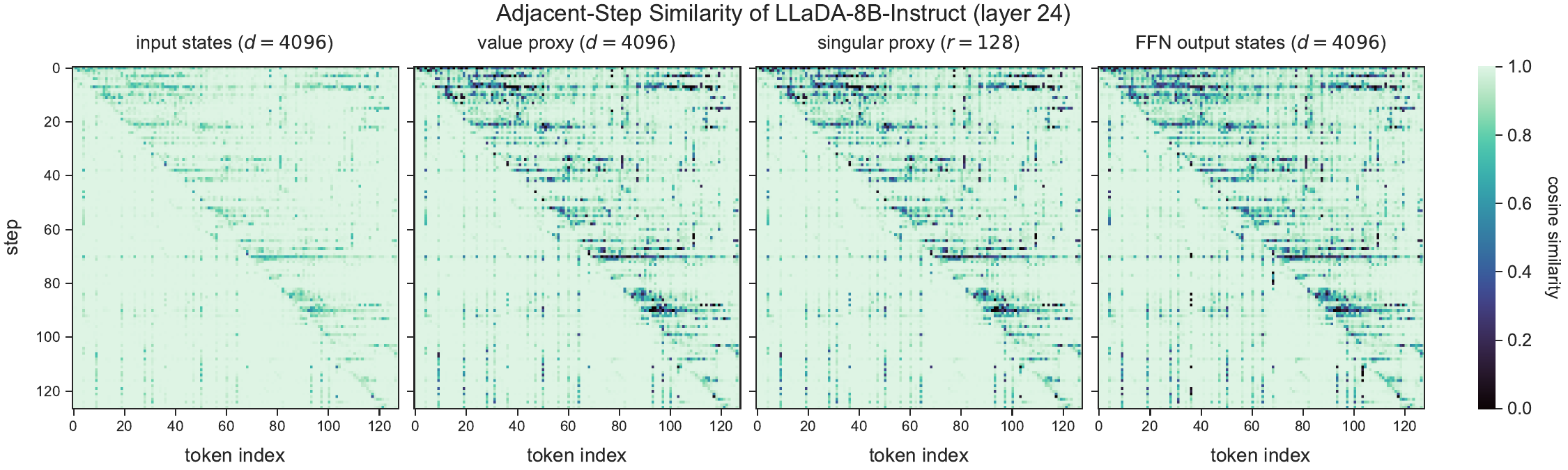}\hfill
    \includegraphics[width=0.48\linewidth]{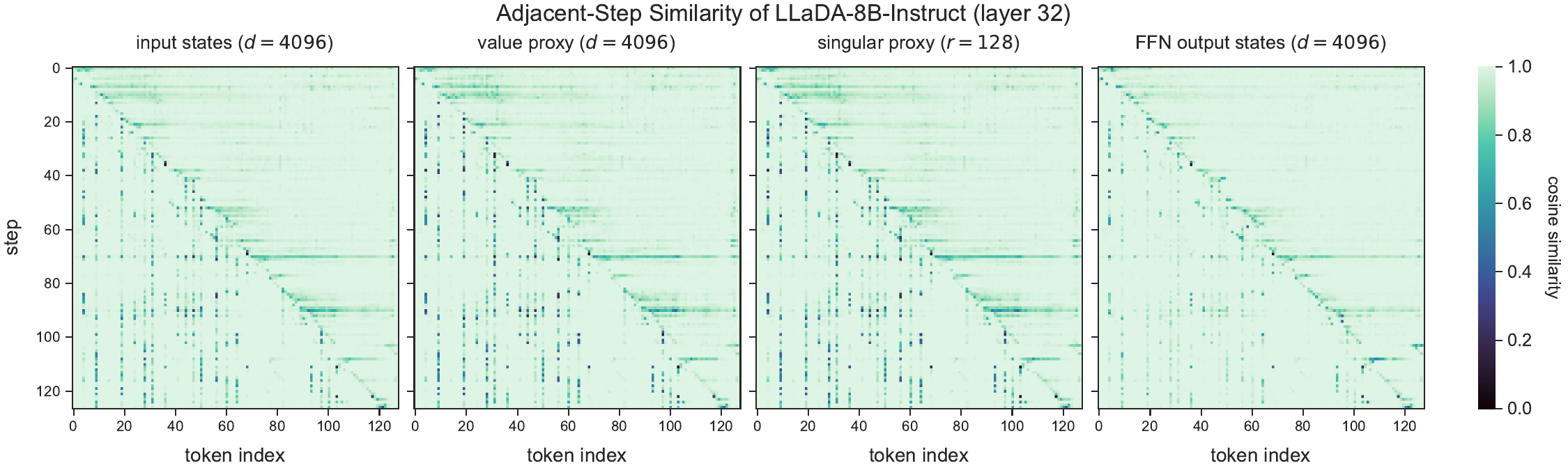}

\caption{
    \textbf{Extended analysis of adjacent-step state similarities.} 
    We visualize the similarities across four features (input, value, singular proxy, and output) for representative layers (1, 12, 24, and 32) of LLaDA-8B-Instruct as in \cref{fig-intro_sim}. The value proxy uncovers drift in the final FFN output more effectively than input states. Furthermore, the singular proxy's near-identical behavior to the value proxy suggests that a low-rank approximation effectively captures value-space drift.
}
\label{fig-app_sim}
\end{figure}

\cref{fig-app_sim} demonstrates that the Value Proxy accurately forecasts FFN output drifts. The near-identical behavior of our proposed singular proxy to the Value Proxy empirically validates \cref{theorem_svd}, confirming that a low-rank approximation effectively captures value-space drift.

\end{document}